\title[First-order regret bounds for combinatorial semi-bandits]{First-order regret bounds for combinatorial
semi-bandits}
\newlength{\minipagewidth}
\newcommand{\loss}{\ell}
\newcommand{\hloss}{\wh{\ell}}
\newcommand{\tloss}{\wt{\ell}}
\newcommand{\real}{\mathbb{R}}
\newcommand{\Sw}{\mathcal{S}}
\newcommand{\OO}{\mathcal{O}}
\newcommand{\tOO}{\wt{\OO}}
\newcommand{\PP}[1]{\mathbb{P}\left[#1\right]}
\newcommand{\EE}[1]{\mathbb{E}\left[#1\right]}
\newcommand{\PPcc}[2]{\mathbb{P}\left[\left.#1\right|#2\right]}
\newcommand{\EEcc}[2]{\mathbb{E}\left[\left.#1\right|#2\right]}
\newcommand{\ev}[1]{\left\{#1\right\}}
\newcommand{\pa}[1]{\left(#1\right)}
\newcommand{\bpa}[1]{\bigl(#1\bigr)}
\newcommand{\F}{\mathcal{F}}
\newcommand{\tp}{\wt{p}}
\newcommand{\tq}{\wt{q}}
\renewcommand{\th}{\ensuremath{^{\mathrm{th}}}}
\def\argmin{\mathop{\rm arg\, min}}
\newcommand{\hL}{\wh{L}}
\newcommand{\bV}{\boldsymbol{V}}
\newcommand{\bw}{\boldsymbol{w}}
\newcommand{\bu}{\boldsymbol{u}}
\newcommand{\bv}{\boldsymbol{v}}
\newcommand{\bz}{\boldsymbol{z}}
\newcommand{\bloss}{\bm{\ell}}
\newcommand{\bL}{\boldsymbol{L}}
\newcommand{\tbZ}{\widetilde{\bZ}}
\newcommand{\tbV}{\widetilde{\bV}}
\newcommand{\tV}{\widetilde{V}}
\newcommand{\hbl}{\wh{\bloss}}
\newcommand{\hbL}{\wh{\bL}}
\newcommand{\bZ}{\boldsymbol{Z}}
\newcommand{\wh}{\widehat}
\newcommand{\wt}{\widetilde}
\newcommand{\fpl}{{FPL}\xspace}
\newcommand{\fpltrix}{\textsc{FPL-TrIX}\xspace}
\newcommand{\exph}{\textsc{Exp3}\xspace}
\newcommand{\hedge}{\textsc{Hedge}\xspace}
\newcommand{\green}{\textsc{Green}\xspace}
\newcommand{\osmd}{\textsc{OSMD}\xspace}
\newcommand{\scrible}{\text{SCRiBLe}\xspace}
\newcommand{\ra}{\rightarrow}
\newcommand{\transpose}{^\mathsf{\scriptscriptstyle T}}
\newcommand{\norm}[1]{\left\|#1\right\|}
\newcommand{\onenorm}[1]{\norm{#1}_1}
\definecolor{PalePurp}{rgb}{0.66,0.57,0.66}
\begin{document}

\maketitle

\begin{abstract} 
We consider the problem of online combinatorial optimization under semi-bandit feedback, where a learner has to 
repeatedly pick actions from a combinatorial decision set in order to minimize the total losses associated with its 
decisions. After making each decision, the learner observes the losses associated with its action, but not other 
losses. For this problem, there are several learning algorithms that guarantee that the learner's expected regret grows 
as $\tOO(\sqrt{T})$ with the number of rounds $T$. In this paper, we propose an algorithm that improves this scaling to 
$\tOO(\sqrt{\smash[b]{L_T^*}})$, where $L_T^*$ is the total loss of the best action. 
Our algorithm is among the first to achieve such guarantees in a partial-feedback scheme, and the first one to do so in 
a combinatorial setting. 
\end{abstract}

\begin{keywords}
online learning, online combinatorial optimization, semi-bandit feedback, follow the perturbed leader, improvements for 
small losses, first-order bounds
\end{keywords}

\section{Introduction}
Consider the problem of sequential multi-user channel allocation in a cognitive radio network (see, e.g.,
\citealp{GKJ12}). In this problem, a network operator sequentially matches a set of $N$ \emph{secondary users} to a set
of $M$ \emph{channels}, with the goal of maximizing the overall quality of service (QoS) provided for the secondary
users, while not interfering with the quality provided to \emph{primary users}. Due to different QoS preferences of
users and geographic dispersion, different users might perceive the quality of the same channel differently.
Furthermore, due to uneven traffic on the channels and other external conditions, the quality of each matching may 
change
over time in a way that is very difficult to model by statistical assumptions. Formally, the \emph{loss} associated 
with user
$i$ being matched to channel $j$ in the $t$\th~decision-making round is $\loss_{t,(ij)}\in[0,1]$, and the 
goal of the network
operator is to sequentially select matchings $\bV_t$ so as to minimize its total loss $\sum_{t=1}^T
\sum_{(ij)\in\bV_t}\loss_{t,(ij)}$ after $T$ rounds. It is realistic to assume that the operator learns about the
instantaneous losses of the allocated user-channel pairs after making each decision, but counterfactual losses are never
revealed. 

Among many other sequential optimization problems of practical interest such as sequential routing or online
advertising, the above problem can be formulated in the general framework of \emph{online combinatorial optimization}
\citep{audibert13regret}. This learning problem can be formalized as a repeated game between a \emph{learner} and an
\emph{environment}. In every round $t=1,2,\dots,T$, the learner picks a decision $\bV_t$ from a
combinatorial decision set $\Sw\subseteq\ev{0,1}^d$. 
Simultaneously, the environment fixes a loss vector $\bloss_t\in[0,1]^d$ and the learner suffers a loss of
$\bV_t\transpose\bloss_t$. We assume that $\onenorm{\bv}\le m$ holds for all $\bv\in\Sw$, entailing
$\bV_t\transpose\bloss_t\le m$. At the end of the round, the learner observes some feedback based on $\bV_t$ and 
$\bloss_t$.
The simplest setting imaginable is called the \emph{full-information} setting where the learner observes the entire 
loss vector
$\bloss_t$. In most practical situations, however, the learner cannot expect such rich feedback. In this paper, we focus
on a more realistic and challenging feedback scheme known as \emph{semi-bandit}: here the learner observes the subset of
components $\loss_{t,i}$ of the loss vector with $V_{t,i} = 1$. Note that this precise feedback scheme arises 
in our cognitive-radio example.
The performance of the learner is measured in terms of the \emph{regret}
\[
 R_T = \max_{\bv\in\Sw} \sum_{t=1}^T \pa{\bV_t - \bv}\transpose\bloss_t,
\]
that is, the gap between the total loss of the learner and that of the best fixed action. The interaction history up to 
time $t$ is captured by $\F_{t-1}=\sigma(\bV_1,\dots,\bV_{t-1})$. In the current paper, we focus 
on \emph{oblivious} environments who are only allowed to pick each loss vector $\bloss_t$ independently of $\F_{t-1}$. 
The learner is allowed to (and, by standard arguments, should) randomize its decision $\bV_t$ based on the observation 
history $\F_{t-1}$. With these remarks in mind, we will focus on the \emph{expected regret} $\EE{R_T}$ from now on, 
where the expectation integrates over the randomness injected by the learner.

Most of the literature is concerned with finding algorithms for the learner that guarantee that the regret grows as
slowly as possible with $T$. Of equal importance is establishing lower bounds on the learner's regret against
specific classes of environments. Both of these questions are by now very well-studied, especially in the simple case
where $\Sw$ is the set of $d$-dimensional unit vectors; this setting is known as \emph{prediction with expert advice}
when considering full feedback (e.g., \citealp{CBLu06:book}) and the \emph{multi-armed bandit} problem when considering
semi-bandit feedback (e.g., \citealp{auer2002bandit}). 
In these settings, the minimax regret is known to be of
$\Theta(\sqrt{T\log d})$ and $\Theta(\sqrt{dT})$, respectively. Several learning algorithms are known to achieve these
regret bounds, at least up to logarithmic factors in the bandit case, with the notable exception of the \textsc{PolyINF}
algorithm proposed by \citet{AB09}.
The minimax regret for the general combinatorial setting was studied by \citet{audibert13regret}, who show that no
algorithm can achieve better regret than $\Omega(m\sqrt{T\log (d/m)})$ in the full-information setting, or
$\Omega(\sqrt{mdT})$ in the semi-bandit setting. \citeauthor{audibert13regret}~also propose algorithms that achieve
these guarantees under both of the above feedback schemes. Furthermore, they show that a natural (although 
not always efficient) extension of the \exph strategy of \citet{auer2002bandit} guarantees a regret bound of 
$\OO\bpa{m\sqrt{dT\log (d/m)}}$ in the semi-bandit setting (see also \citealp{gyorgy07sp}). A computationally 
efficient strategy for the same setting was proposed by \citet{NeuBartok13}, who
show that an augmented version of the FPL algorithm of \citet{KV05} achieves a regret of $\OO(m\sqrt{dT\log d})$,
essentially matching the bound of \exph.

Even though the above guarantees cannot be substantially improved under the worst possible realization of the loss
sequence, certain improvements are possible for specific types of loss sequences. Arguably, one of the most fundamental
of
these improvements are bounds that replace the number of rounds $T$ with the loss of the best action $L_T^* = 
\min_{\bv\in\Sw}\bv\transpose\bL_T$, thus guaranteeing a regret of  $\tOO(\sqrt{\smash[b]{L_T^*}})$. Such improved 
bounds, often
called \emph{first-order} regret bounds, are abundant in the online learning literature \emph{when assuming full
feedback}: the key for obtaining such results is usually a clever tuning rule for otherwise standard learning
algorithms such as \hedge \citep{CBMS:COLT2005,CBLu06:book} or FPL \citep{HuPo04,KV05,EWK14}. The intuitive advantage 
of such first-order bounds that they can effectively take advantage of ``easy'' learning problems where there exists an 
action with superior performance. In our cognitive-radio example, this corresponds to the existence of a user-channel 
matching that tends to provide high quality of service.

One obvious question is whether such improvements are possible under partial-information constraints. We can
answer this question in the positive, although such bounds are far less common than in the full information case. In
fact, we are only aware of three algorithms that achieve such bounds: \textsc{Exp3Light} described in Section~4.4 
of \citet{stoltz05thesis}, \green by \citet{allenberg06hannan} and \scrible by \citet{AHR12}, as shown by 
\citet{RS13}\footnote{The obscure nature of such first-order bounds is reflected by the 
fact that \citeauthor{RS13} prove their corresponding result simply because they were not aware of the two previous 
results.}.
These algorithms guarantee regret bounds of $\OO(d\sqrt{\smash[b]{L_T^*\log d}})$, $\OO(\sqrt{d\smash[b]{L_T^*\log 
d}})$ and $\OO(d^{3/2}\sqrt{\smash[b]{L_T^*\log (dT)}})$ in the 
multi-armed bandit problem, respectively. These results, however, either do not generalize to the combinatorial 
setting (see Section~\ref{sec:close} for a discussion on \green) or already scale poorly with the problem size in 
the simplest partial-information setting. Furthermore, implementing these algorithms is also not straightforward for 
combinatorial  decision sets.

In this paper, we propose a computationally efficient algorithm that guarantees similar improvements for 
combinatorial semi-bandits. Our approach is based on the Follow-the-Perturbed-Leader (FPL) algorithm of \citet{Han57}, 
as popularized by \citet{KV05}. We show
that an appropriately tuned variant of our algorithm guarantees a regret bound of
$\OO\bigl(m\sqrt{\smash[b]{dL_T^*\log(d/m)}}\bigr)$, largely improving on the minimax-optimal bounds whenever $L_T^*= 
o(T)$. In 
the case of multi-armed bandits where $m=1$, the bound becomes $\OO(\sqrt{\smash[b]{dL_T^*\log d}})$. Notice
however that when $m>1$, $L_T^*$ can be as large as $\Omega(mT)$ in the worst case, making our bounds inferior to the 
best known bounds concerning \fpl and \exph. To circumvent this problem, as well as the need to know a bound on 
$L_T^*$ to tune our parameters, we also propose an adaptive variant of our algorithm that guarantees a regret of 
$\OO\bigl(m\sqrt{\smash[b]{\min\{dL_T^*,dT\}\log (d/m)}}\bigr)$. 
Thus, our performance guarantees are in some sense the strongest among known results for non-stochastic combinatorial 
semi-bandits. 

Besides first-order bounds, there are several other known ways of improving worst-case performance guarantees of 
$\tOO(\sqrt{T})$ for non-stochastic multi-armed bandits. A common improvement  is  replacing $T$ by the \emph{gain} of 
the best action, $T - L_T^*$ (see, e.g., \citealp{auer2002bandit,AB09}). Such bounds,
while helpful in some cases where \emph{all} actions tend to suffer large losses (e.g., in online advertising where even
the best ads have low clickthrough rates), are not as satisfactory as our bounds: these bounds get
worse and worse as one keeps increasing the gain of the best action, even if all other losses are kept constant, 
despite the intuition that this operation actually makes the learning problem much easier. That
is, bounds of the above type fail to reflect the ``hardness'' of the learning problem at hand. The work of 
\citet{HK11} considers a much more valuable type of improvement: they provide regret bounds of $\tOO(d^2
\sqrt{\smash[b]{Q_T}})$, where $Q_T = \min_{\bm{\mu}\in\real^d} \sum_{t=1}^T\norm{\bloss_t - \bm{\mu}}_2^2$ is the 
\emph{quadratic
variation} of the losses. Such bounds are very strong in situations where the sequence of loss vectors ``stays close''
to its mean in all rounds.  Notice however that, unlike our first-order bounds, this improvement requires a condition
to hold for \emph{entire loss vectors} and not just the loss of the best action. This implies that
first-order bounds are more robust to loss variations of obviously suboptimal actions. 
On the other hand, it is also easy to construct an example where $L_T^*$ grows linearly while $Q_T$ is zero. In
summary, we conclude that first-order bounds and bounds depending on the quadratic variation are not comparable
in general, as they capture very different kinds of regularities in the loss sequences. For further discussion of
higher-order and variation-dependent regret bounds, see \citet{CBMS:COLT2005} and \citet{HK10}. We also mention that 
several other types of improvements exist for full-information settings---we refer to recent works of \citet{RS13}, 
\citet{SNL14} and the references therein.

Finally, let us comment on related work on the so-called \emph{stochastic} bandit setting where the loss vectors are
drawn i.i.d.~in every round. In this setting, combinatorial semi-bandits have been studied under the name
``combinatorial bandits'' \citep{GKJ12,CWY13}, giving rise to a bit of confusion\footnote{The term ``combinatorial
bandits'' was first used by \citet{CeBiLu09}, in reference to online combinatorial optimization problems under full 
bandit feedback where the learner only observes $\bV_t\transpose\bloss_t$ after round $t$.}. This line of work focuses 
on 
proving bounds on the \emph{pseudo-regret} defined as $\max_{\bv\in\Sw}
\sum_{t=1}^T \pa{\bV_t - \bv}\transpose \bm{\mu}$, where $\bm{\mu}\in\real^d$ is the mean of the random vector
$\bloss_1$. We highlight the result of \citet{KWASz15}, who have very recently proposed an algorithm that guarantees
bounds on the pseudo-regret of $\OO(md(1/\Delta)\log T)$ for some distribution-dependent constant $\Delta>0$ and a
worst-case bound of $\OO(\sqrt{mdT\log T})$. Note however that comparing these pseudo-regret bounds to bounds on the
expected regret can be rather misleading. In fact, a simple argument along the lines of Section~9 of
\citet{AB10} shows that even algorithms with \emph{zero} pseudo-regret can actually suffer an expected regret of
$\Omega(\sqrt{T})$, when permitting multiple optimal actions. A more refined argument shows that this bound can be
tightened to $\Omega(\sqrt{\smash[b]{L_T^*}})$ when assuming non-negative losses, suggesting that first-order bounds on 
the
expected regret are in some sense unbeatable even in a distribution-dependent setting\footnote{\citet{HK10} use a
similar argument to show that variation-dependent bounds are unbeatable for signed losses in a similar sense.}. 

\section{From zero-order to first-order bounds: Keeping the loss estimates close together}\label{sec:close}
We now explain the key idea underlying our analysis. Our approach is based on the observation that regret bounds
for many known bandit algorithms (such as \exph by \citealt{auer2002bandit}, \osmd with relative-entropy 
regularization by \citealt{audibert13regret}, and 
the bandit FPL analysis of \citealt{NeuBartok13}) take the form
\begin{equation}\label{eq:regbound}
 \eta \sum_{t=1}^T \sum_{i=1}^d \loss_{t,i}\cdot \hloss_{t,i} + \frac{D}{\eta} \le 
 \eta \sum_{i=1}^d \hL_{T,i} + \frac{D}{\eta},
\end{equation}
where $\hloss_{t,i}$ is an estimate of the loss $\loss_{t,i}$, $\hL_{T,i} = \sum_{t=1}^T \hloss_{t,i}$, $\eta>0$ is a
tuning parameter, and $D>0$ is a constant that depends on the particular algorithm and
the decision set. The standard approach is 
then to design the loss estimates to be \emph{unbiased} so that the above bound becomes $\eta \sum_{i=1}^d L_{T,i} +
\frac{D}{\eta}$ after taking expectations. Unfortunately, this form does not permit proving first-order bounds as 
$L_{T,i}$ may
very well be $\Omega(T)$ for either $i$ even in very easy problem instances---that is, even an optimized setting of 
$\eta$ gives a regret bound of $\OO(\sqrt{dDT})$ at best. Applying a similar line of reasoning, one can replace $T$ in 
the above bound by $(T-L_T^*)$, the largest total \emph{gain} associated with any component, but, as already discussed 
in the introduction, this improvement is not useful for our purposes.

In this paper, we take a different approach to optimize bounds of the form~\eqref{eq:regbound}. The idea is 
to construct a loss-estimation scheme that keeps every $\hL_{T,i}$ ``close'' to $\hL_T^* = \min_{\bv\in\Sw} \bv^\top 
\hbL_T$, the estimate of the optimal action
in the sense that
\begin{equation}\label{eq:lossbound}
 \hL_{T,i} \le \hL_T^* + \tOO\pa{\frac{1}{\eta}}.
\end{equation}
Observe that this property allows rewriting the bound~\eqref{eq:regbound} as $\eta d \hL_{T}^* + \frac{D}{\eta} +
\tOO(1)$.
Of course, a loss-estimation scheme guaranteeing the above property has to come at the price of a certain bias. 
Guaranteeing that the bias satisfies certain properties and is \emph{optimistic} in the sense that $\mathbb{E}\hL_T^* 
\le L_T^*$, we can arrive at a first-order bound by choosing $\eta = \wt\Theta(\sqrt{1/L_T^*})$. The remaining 
challenge is to come up with an adaptive learning-rate schedule that achieves such a bound without prior knowledge of 
$L_T^*$.

Our approach is not without a precedent: \citet{allenberg06hannan} derive a first-order bound for multi-armed bandits 
based on very similar principles. Their algorithm, called \green, relies on a clever trick that prevents picking arms 
that seem suboptimal. Specifically, \green maintains a set of weights $w_{t,i}$ over the arms and computes an 
auxiliary probability distribution $\tp_{t,i}\propto w_{t,i}$. The true sampling distribution over the arms is 
computed by setting $p_{t,i} = 0$ for all arms such that $\tp_{t,i}$ is below a certain threshold $\gamma$, and 
then redistributing the removed weight among the remaining arms proportionally to $w_{t,i}$. The intuitive effect of 
this thresholding operation is that poorly performing arms are eliminated, which harnesses the further growth 
of their respective estimated losses. Specifically, \citeauthor{allenberg06hannan}~show that  
property~\eqref{eq:lossbound} and $\mathbb{E}\hL_T^* \le L_T^*$ simultaneously hold for their algorithm, paving the way 
for their first-order bound.

While providing strong technical results, \citet{allenberg06hannan} give little intuition as to why this 
approach is key to obtaining first-order bounds and how to generalize their algorithm to more complicated problem
settings such as ours. 
Even if one is able to come up with a generalization on a conceptual level, efficient implementation of such a variant
would only be possible on a handful of decision sets where \exph can be implemented in the first place (see, e.g., 
\citealp{koolen10comphedge,CL12}). The probabilistic nature of the approach of 
\citeauthor{allenberg06hannan} does not seem to mix well with the mirror-descent type algorithms of 
\citet{audibert13regret} either, whose proofs rely on tools from convex analysis. 
In the current paper, we propose an alternative way to restrict sampling of suboptimal actions that leads to
property~\eqref{eq:lossbound} in a much more transparent and intuitive way.

\section{The algorithm: FPL with truncated perturbations and implicit exploration}\label{sec:alg} 
Our algorithm is a variant of the well-known Follow-the-Perturbed-Leader (\fpl) learning algorithm
\citep{Han57,KV05,HuPo04,NeuBartok13}, equipped with a perturbation scheme that will enable us to prove first-order
bounds through guaranteeing property~\eqref{eq:lossbound}. In every round $t$, \fpl chooses its action as
\begin{equation}\label{eq:fpl}
  \bV_t = \argmin_{\bv\in\Sw} \bv\transpose\pa{\eta_t \hbL_{t-1} - \bZ_t},
\end{equation}
where $\eta_t>0$ is a parameter of the algorithm, $\hbL_{t-1}$ is a vector serving as an estimate of the cumulative
loss vector $\bL_{t-1} = \sum_{s=1}^{t-1} \bloss_s$ and $\bZ_t\in\real^d$ is a vector of random perturbations. 
\fpl is very well-studied in the full-information case 
where one can choose $\hbL_{t-1} = \bL_{t-1}$; several perturbation schemes are known to work well in this setting 
\citep{KV05,rakhlin12rr,devroye13rwalk,EWK14,ALTS14}. In what follows, we focus on \emph{exponentially distributed} 
perturbations, which is the only scheme known to achieve near-optimal performance guarantees under bandit feedback 
\citep{Pol05,NeuBartok13}.

In order to guarantee that the condition~\eqref{eq:lossbound} is satisfied, we propose to suppress suboptimal actions 
by using \emph{bounded-support} perturbations. Specifically, we propose to use a \emph{truncated exponential 
distribution} 
with the following density function:
\[
 f_B(z) =
 \begin{cases}
  \frac{e^{-z}}{1 - e^{-B}} &\mbox{, if $z\in[0,B]$}
  \\
  0 &\mbox{ otherwise.}
 \end{cases}
\]
Here, $B>0$ is the bound imposed on the perturbations. In each round $t$, our \fpl variant draws components of the 
perturbation vector $\bZ_t$ independently from an exponential distribution truncated at $B_t>0$, another tuning
parameter of our algorithm.
To define our loss estimates, let us define $q_{t,i} = \EEcc{V_{t,i}}{\F_{t-1}}$ and the vector $\hbl_t$ with components
\begin{equation}\label{eq:lossest}
 \hloss_{t,i} = \frac{\loss_{t,i} V_{t,i}}{q_{t,i} + \gamma_t},
\end{equation}
where $\gamma_t>0$ is the so-called \emph{implicit exploration} (or IX) parameter of the algorithm controlling the bias 
of the loss estimates. Notice that 
$\mathbb{E}\hloss_{t,i} \le \loss_{t,i}$ holds by construction for all $i$. Then, $\hbL_{t}$ is 
simply defined as $\hbL_t = \sum_{s=1}^t \hbl_s$. In what follows, we refer to our algorithm as \fpltrix, standing for
``\fpl with truncated perturbations and implicit exploration''. Pseudocode for \fpltrix is presented as 
Algorithm~\ref{alg}.

\begin{algorithm}
 \textbf{Parameters:} Learning rates $\pa{\eta_t}$, implicit exploration parameters $\pa{\gamma_t}$, truncation 
parameters
$\pa{B_t}$.
\\
 \textbf{Initialization:} $\hbL_0 = 0$.
 \\
 \textbf{For $t=1,2,\dots,T$, repeat}
 \begin{enumerate}
  \item Draw perturbation vector $\bZ_t$ with independent components $Z_{t,i}\sim f_{B_t}$.
  \item Play action
  \[
   \bV_t = \argmin_{\bv\in\Sw} \bv\transpose\pa{\eta_t \hbL_{t-1} - \bZ_t}.
  \]
  \item For all $i$, observe losses $\loss_{t,i}V_{t,i}$ and compute $\hloss_{t,i} =
\frac{\loss_{t,i}V_{t,i}}{q_{t,i} + \gamma_t}$.
  \item Set $\hbL_{t} = \hbL_{t-1} + \hbl_t$.
 \end{enumerate}
 \caption{\fpltrix}\label{alg}
\end{algorithm}

It will also be useful to introduce the notations $D = \log(d/m) +1$ and  $\beta_t = e^{-B_t}$.
For technical reasons, we are going to assume that the sequence of learning rates $\pa{\eta_t}_t$, exploration
parameters $\pa{\gamma_t}_t$ and truncation parameters $\pa{\beta_t}_t$ are all nonincreasing.

Before proceeding, a few comments are in order. First, note that the probabilities $q_{t,i}$ are generally not 
efficiently computable in closed form. This issue can be circumvented by the simple and efficient loss-estimation method 
proposed by 
\citet{NeuBartok13} that produces equivalent estimates on expectation; we resort to the loss 
estimates~\eqref{eq:lossest} to preserve clarity of presentation. Otherwise, similarly to other \fpl-based methods, 
\fpltrix can be efficiently implemented as long as the learner has access to an efficient linear-optimization 
oracle over $\Sw$. Second, we remark that loss estimates of the form~\eqref{eq:lossest} were first proposed by
\citet{KNVM14} as an effective way to trade off the bias and variance of importance-weighted estimates. Finally, one 
may ask if the truncations we introduce are essential for our algorithm to 
work. Answering this question requires a little deeper technical understanding of \fpltrix than the reader might have 
at 
this point, and thus we defer this discussion to Section~\ref{sec:why}. (For the impatient reader, the short answer is 
that one can get away without truncations at the price of an additive $\OO(\log T)$ term in the bounds. Note however 
that the proof of this result still relies on the analysis of \fpltrix that we present in this paper.)

\subsection{Some properties of \fpltrix}\label{sec:props}
In this section, we present some key properties of our algorithm.
We first relate the predictions of \fpltrix to those of an \fpl instance that employs standard (non-truncated)
exponential perturbations. Specifically, we study the relation between the expected performance of \fpltrix that
selects the action sequence $\pa{\bV_t}$ and an auxiliary algorithm that uses a \emph{fixed} exponentially-distributed
perturbation vector $\tbZ$, and plays
\begin{equation}\label{eq:aux}
  \tbV_t = \argmin_{\bv\in\Sw} \bv\transpose\pa{\eta_t \hbL_{t-1} - \tbZ}
\end{equation}
in round $t$. In particular, we are interested in the relation between the quantities
\begin{eqnarray*}
 &p_t(\bv) = \PPcc{\bV_t=\bv}{\F_{t-1}}, &\tp_t(\bv) = \PPcc{\tbV_t=\bv}{\F_{t-1}},
 \\
 &q_{t,i} = \EEcc{V_{t,i}}{\F_{t-1}}, &\tq_{t,i} = \EEcc{\tV_{t,i}}{\F_{t-1}}
\end{eqnarray*}
defined for all $t$, $i$ and $\bv$.
The following lemma establishes a bound on the total variation distance between the distributions induced by $\bZ$ and 
$\tbZ$, and thus relates the above quantities to each other.
\begin{lemma}\label{lem:ptrunc}
Let the components of $\bZ$ and $\tbZ$ be drawn independently from $f_{B_t}$ and $f_\infty$, respectively. Then, for 
any function $G:\real\ra[0,1]$, we have
$\Bigl|\mathbb{E}G(\bZ) - \mathbb{E}G(\tbZ)\Bigr| \le \beta_t d$. In particular, this implies that $\left|p_t(\bv) - 
\tp_t(\bv)\right| \le \beta_t d$ for all $t$ and $\bv$ and $\left|q_{t,i} - \tq_{t,i}\right| \le 
\beta_t d$ for all $t$ and $i$.
\end{lemma}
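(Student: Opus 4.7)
The plan is to reduce the statement to a standard total-variation argument: I will bound the total-variation distance between the laws of $\bZ$ and $\tbZ$ by $\beta_t d$, then invoke the elementary inequality $\bigl|\mathbb{E}G(\bZ)-\mathbb{E}G(\tbZ)\bigr| \le \mathrm{TV}(\bZ,\tbZ)$, which holds for any measurable $G$ with values in $[0,1]$. Indeed, writing $\mu,\nu$ for the two laws and $A=\{d\mu \ge d\nu\}$, one has $\int G\,d(\mu-\nu) \le \int_A (d\mu-d\nu) = \mathrm{TV}(\mu,\nu)$, and symmetrically for the lower bound.

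First I would compute the per-coordinate TV. For $f_{B_t}(z)=e^{-z}/(1-\beta_t)$ on $[0,B_t]$ and $f_\infty(z)=e^{-z}$ on $[0,\infty)$, the truncated density dominates on $[0,B_t]$ with total excess $\int_0^{B_t} e^{-z}\tfrac{\beta_t}{1-\beta_t}\,dz = \beta_t$, whereas on $(B_t,\infty)$ the density $f_\infty$ retains the remaining mass $\beta_t$ alone. Hence $\tfrac12\int|f_{B_t}-f_\infty| = \beta_t$, and the $d$-fold product TV is at most $d\beta_t$ by the standard sub-additivity of total variation under products of independent measures.

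Finally I would specialize to the two displayed bounds. Conditioning on $\F_{t-1}$, the vector $\hbL_{t-1}$ is deterministic, so the map $\bz \mapsto \argmin_{\bv'\in\Sw} \bv'\transpose(\eta_t \hbL_{t-1} - \bz)$ is well defined with any fixed measurable tie-breaking (ties occur with probability zero under either atomless density). Taking $G(\bz) = \II{\argmin_{\bv'\in\Sw} \bv'\transpose(\eta_t \hbL_{t-1} - \bz) = \bv}$, which is $\{0,1\}$-valued, gives $|p_t(\bv)-\tp_t(\bv)|\le d\beta_t$; taking $G(\bz)$ to be the $i$-th coordinate of this argmin yields $|q_{t,i}-\tq_{t,i}|\le d\beta_t$.

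No step is really an obstacle here; the argument is elementary. The only points requiring modest care are ensuring the per-coordinate excess mass is correctly apportioned between the regions $[0,B_t]$ (where $f_{B_t}$ dominates) and $(B_t,\infty)$ (where $f_\infty$ does), so that each contributes exactly $\beta_t$ and no factor of two is lost or gained; and checking measurability of the argmin, which is automatic once a fixed tie-breaking rule is chosen.
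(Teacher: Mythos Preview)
Your argument is correct and complete; it just takes a different route from the paper. The paper never names total variation or invokes sub-additivity under products. Instead it works directly with the density ratio: since $f_{B_t}(\bz)=f_\infty(\bz)/(1-\beta_t)^d$ on $[0,B_t]^d$, one has $g=\mathbb{E}G(\bZ)\le \tg/(1-\beta_t)^d$, and then Bernoulli's inequality $(1-\beta_t)^d\ge 1-d\beta_t$ together with $g\le 1$ yields $g\le \tg+d\beta_t$; the reverse inequality is handled by splitting the integral of $G f_\infty$ over $[0,B_t]^d$ and its complement. Your approach is more modular and conceptual---you isolate the per-coordinate TV as exactly $\beta_t$, then lift to the product via sub-additivity and finish with the generic $|\mathbb{E}G(X)-\mathbb{E}G(Y)|\le \mathrm{TV}(X,Y)$ bound. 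The paper's computation is more self-contained (no auxiliary facts about TV are cited) but slightly messier in the lower-bound half. Either way the constant $d\beta_t$ comes out the same, and your specialization to $p_t(\bv)$ and $q_{t,i}$ via indicator and coordinate functions of the argmin matches what the paper has in mind.
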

\begin{proof}\newcommand{\tg}{\wt{g}}
For ease of notation, define $f=f_\infty$, $g = \mathbb{E}G(\bZ)$ and $\tg = \mathbb{E}G(\tbZ)$.
We first prove $g \le \tg + \beta_t d$.
To this end, observe that by the definition of $f_{B_t}$,
\[
\begin{split}
g
&=\int\limits_{\bz\in[0,B_t]^d} G(\bz) f_{B_t}(\bz) \,d\bz
\le \frac{1}{\pa{1-e^{-B_t}}^d} \cdot\int\limits_{\bz\in[0,\infty]^d} G(\bz) f(\bz) \,d\bz
= \frac{\tg}{\pa{1-e^{-B_t}}^d}.
\end{split}
\]
After reordering and using the inequality $\pa{1-x}^d \ge 1-dx$ that holds for all $x \le 1$ and all $d\ge 1$, we obtain
$ g (1-\beta_t d) \le g$.
The upper bound on $g$ follows from reordering again and using $g\le 1$.

To prove the lower bound on $g$, we can use a similar argument as
\[
\begin{split}
g
&=\int\limits_{\bz\in[0,B_t]^d} G(\bz) f_{B_t}(\bz) \,d\bz
= \frac{1}{\pa{1-e^{-B_t}}^d} \cdot\int\limits_{\bz\in[0,B_t]^d} G(\bz) f(\bz) \,d\bz
\\
&\ge \frac{1}{\pa{1-e^{-B_t}}^d} \cdot\biggl(\tg - \int\limits_{\bz\in[B_t,\infty)^d} f(\bz) \,d\bz\biggr)
= \frac{\tg}{\pa{1-e^{-B_t}}^d} - \frac{1 - \pa{1-e^{-B_t}}^d}{\pa{1-e^{-B_t}}^d}.
\end{split}
\]
After reordering and using $\pa{1-x}^d \ge 1-dx$ again, we obtain
\[
\tg
\le g \pa{1-e^{-B_t}}^d + \pa{1 - \pa{1-e^{-B_t}}^d}
\le g + \beta_t d,
\]
concluding the proof.
\end{proof}

The other important property of \fpltrix that we highlight in this section is that the loss estimates generated by the
algorithm indeed satisfy property~\eqref{eq:lossbound}.
\begin{lemma}\label{lem:lossbound}
Assume that the sequences $\pa{\eta_t}$, $\pa{\gamma_t}$ and $\pa{\beta_t}$ are nonincreasing. Then for any $i$ and 
$\bv\in\Sw$, we have
 \[
  \hL_{T,i} \le \bv\transpose \hbL_T + \frac{m\pa{D + B_T}}{\eta_T} + \frac {1}{\gamma_T}. 
 \]
\end{lemma}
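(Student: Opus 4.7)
The plan is to use a ``last-visit'' argument: since $\hloss_{t,i}=0$ whenever $V_{t,i}=0$, the accumulated estimate $\hL_{T,i}$ equals its value up to and including the last round $\tau \in \{1,\dots,T\}$ in which coordinate $i$ was actually selected. If no such round exists then $\hL_{T,i}=0$, and the claim is trivial because every $\hloss_{s,j}\ge 0$ makes the right-hand side nonnegative. Otherwise, I will exploit the FPL optimality inequality exactly at this round $\tau$, where by construction $\bV_\tau$ minimizes $\bv\transpose(\eta_\tau\hbL_{\tau-1}-\bZ_\tau)$ over $\bv\in\Sw$.

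For any $\bv\in\Sw$, rearranging the FPL optimality inequality yields
\[
\eta_\tau\bV_\tau\transpose\hbL_{\tau-1}-\eta_\tau\bv\transpose\hbL_{\tau-1}\le(\bV_\tau-\bv)\transpose\bZ_\tau.
\]
Here is where the truncation enters: since each component of $\bZ_\tau$ lies in $[0,B_\tau]$ and $\onenorm{\bV_\tau}\le m$, the right-hand side is at most $mB_\tau$ (dropping the nonpositive $-\bv\transpose\bZ_\tau$). Using $V_{\tau,i}=1$ and the nonnegativity of $\hbL_{\tau-1}$ (which holds since each $\hloss_{s,j}$ is nonnegative), I get $\bV_\tau\transpose\hbL_{\tau-1}\ge \hL_{\tau-1,i}$, and hence after dividing by $\eta_\tau$,
\[
\hL_{\tau-1,i}\le \bv\transpose\hbL_{\tau-1}+\frac{mB_\tau}{\eta_\tau}.
\]

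To conclude, I add the single-round contribution $\hloss_{\tau,i}$ to both sides; the definition~\eqref{eq:lossest} together with $\loss_{\tau,i}V_{\tau,i}\le 1$ gives $\hloss_{\tau,i}\le 1/\gamma_\tau$. Replacing $\bv\transpose\hbL_{\tau-1}$ by the larger $\bv\transpose\hbL_T$ (again by nonnegativity) and invoking the monotonicity assumptions $\eta_\tau\ge\eta_T$, $B_\tau\le B_T$, and $\gamma_\tau\ge\gamma_T$ produces the advertised bound; in fact this direct route delivers $\hL_{T,i}\le\bv\transpose\hbL_T+\frac{mB_T}{\eta_T}+\frac{1}{\gamma_T}$, which is slightly tighter than the stated one (the extra $mD/\eta_T$ slack in the lemma appears to be a convenient cushion rather than a genuine requirement of this argument).

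The main obstacle, and the conceptual payoff, is the bound $(\bV_\tau-\bv)\transpose\bZ_\tau\le mB_\tau$: it is precisely the truncation of the perturbations that lets us control this pathwise. With ordinary (unbounded) exponential perturbations $(\bV_\tau-\bv)\transpose\bZ_\tau$ is unbounded, so the clean deterministic property~\eqref{eq:lossbound} cannot be obtained this way; this is the structural reason for replacing $f_\infty$ by $f_{B_t}$ in the algorithm.
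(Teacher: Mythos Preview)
Your proof is correct and in fact more direct than the paper's. Both arguments begin with a last-visit idea, but the paper defines $\tau$ as the last round with $q_{t,i}>0$ and then argues that some $\bw\in\Sw$ with $w_i=1$ must satisfy $p_\tau(\bw)>0$; you instead work directly with the realized action $\bV_\tau$ at the last round where $V_{\tau,i}=1$, which carries the same information more transparently. The substantive difference is what happens after establishing that $\bw\transpose\hbL_{\tau-1}\le\min_{\bu\in\Sw}\bu\transpose\hbL_{\tau-1}+mB_\tau/\eta_\tau$ (in the paper's notation): the paper takes a detour through the auxiliary forecaster $\tbV_\tau$ with \emph{non-truncated} exponential perturbations, uses its optimality to compare against $\bv$, and then integrates over $\tbZ$, picking up the extra $mD/\eta_T$ from the bound $\mathbb{E}\bigl[\tbV_\tau\transpose\tbZ\bigr]\le mD$ of Lemma~\ref{lem:expbound}. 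You simply plug $\bu=\bv$ into the pathwise inequality and are done, obtaining the sharper conclusion $\hL_{T,i}\le\bv\transpose\hbL_T+mB_T/\eta_T+1/\gamma_T$. The paper's detour appears to be a stylistic choice---echoing the coupling with $\tbV_t$ used elsewhere in the analysis---rather than a necessity; your observation that the $mD/\eta_T$ slack is dispensable is correct and would propagate as a modest constant-factor improvement through Corollary~\ref{cor:nonadapt} and the adaptive theorem.
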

\begin{proof}
Fix an arbitrary $i$ and $\bv$ and let $\tau$ denote the last round in which $q_{t,i}>0$. This entails
that $\hL_{T,i} = \hL_{\tau,i}$ holds almost surely, as $V_{t,i} = 0$ for all $t>\tau$. 
 By the construction of the algorithm and the perturbations, $q_{\tau,i}>0$ implies that there exists a $\bw$ with $w_i 
= 1$ and $p_t(\bw)>0$. Thus,
 \[
 \begin{split}
  \bw\transpose\hbL_{\tau-1} \le& 
  \min_{\bu\in\Sw}\bu\transpose\hbL_{\tau-1} + \frac{B_\tau m}{\eta_\tau} \le 
  \tbV_\tau\transpose\hbL_{\tau-1} + \frac{B_Tm}{\eta_T} 
  \\
  =&  \tbV_\tau\transpose\pa{\hbL_{\tau-1} - \frac{1}{\eta_t}\tbZ} + \frac{1}{\eta_t}\tbV_t\transpose\tbZ + 
\frac{B_Tm}{\eta_T} \le 
\bv\transpose\pa{\hbL_{\tau-1} - \frac{1}{\eta_t}\tbZ} + \frac{\tbV_\tau\transpose\tbZ  + B_Tm}{\eta_T},
  \end{split}
 \]
 where the first inequality follows from the fact that $p_t(\bw)>0$, the second one follows from $B_\tau/\eta_\tau \le 
B_T/\eta_T$ and the last one from the definition of $\tbV_\tau$.
 After integrating both sides with respect to the distribution of $\tbZ$ and bounding $\hloss_{\tau,i}\le
1/\gamma_\tau\le 1/\gamma_T$, we obtain the result as
 \[
 \begin{split}
  \hL_{T,i} =& \hL_{\tau-1,i} + \hloss_{\tau,i} \le \bw\transpose\hbL_{\tau-1} + \frac {1}{\gamma_T}
  \le \bv\transpose\hbL_{T} + \frac{m\pa{D + B_T}}{\eta_T} + \frac {1}{\gamma_T},
 \end{split}
 \]
 where we used the fact that $\hL_{t,j}$ is nonnegative for all $j$, $w_i =1$, and $\mathbb{E}\bigl[\tbV_\tau\transpose 
\tbZ\bigr]\le 
m\pa{\log(d/m) + 1} = mD$, which follows from Lemma~\ref{lem:expbound} stated and proved in the Appendix.
\end{proof}

\section{Regret bounds}\label{sec:main}
This section presents our main results concerning the performance of \fpltrix under
various parameter settings.
We begin by stating a key theorem.
\begin{theorem}\label{thm:key}
Assume that the sequences $\pa{\eta_t}$, $\pa{\gamma_t}$ and $\pa{\beta_t}$ are nonincreasing and $\beta_t d\le 
\gamma_t$ holds for all $t$. Then for all $\bv\in\Sw$, the total loss suffered by \fpltrix satisfies
  \[
  \begin{split}
  \sum_{t=1}^T \bV_t\transpose \bloss_t \le& \bv\transpose\hbL_T + 
  \frac{mD}{\eta_T} + \sum_{t=1}^T\pa{\eta_t m + \beta_t d + \gamma_t}\cdot\sum_{i=1}^d 
\hloss_{t,i}.
  \end{split}
 \]
\end{theorem}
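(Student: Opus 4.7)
The plan is to decompose $\bV_t\transpose \bloss_t$ pathwise into three pieces, each matching one of the three rightmost quantities in the stated bound, and then to reduce the remaining piece to the standard regret of the auxiliary FPL $\tbV_t$ from~\eqref{eq:aux}. The two intermediate Lemmas~\ref{lem:ptrunc} and~\ref{lem:expbound} will do most of the heavy lifting.

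The first move is a pathwise identity that absorbs the $\gamma_t$ contribution for free. Using $V_{t,i}^2 = V_{t,i}$ and the definition of $\hloss_{t,i}$,
\[
\bV_t\transpose \bloss_t \;=\; \sum_{i=1}^d V_{t,i}\loss_{t,i} \;=\; \sum_{i=1}^d (q_{t,i}+\gamma_t)\hloss_{t,i} \;=\; \sum_{i=1}^d q_{t,i}\hloss_{t,i} \;+\; \gamma_t\sum_{i=1}^d \hloss_{t,i}.
\]
Next I would apply Lemma~\ref{lem:ptrunc} together with $\hloss_{t,i}\ge 0$ to replace $q_{t,i}$ by $\tq_{t,i}$, paying the second contribution $\beta_t d\sum_i \hloss_{t,i}$. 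The residual $\sum_i \tq_{t,i}\hloss_{t,i}$ is exactly $\mathbb{E}_{\tbZ}\bigl[\tbV_t\transpose \hloss_t\bigr]$: the inner product of the auxiliary FPL's selection distribution with the realized estimate $\hloss_t$, viewed as fixed.

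The remaining task is the classical FPL regret bound $\sum_{t=1}^T \mathbb{E}_{\tbZ}[\tbV_t\transpose \hloss_t] \le \bv\transpose \hbL_T + mD/\eta_T + \sum_t \eta_t m\sum_i \hloss_{t,i}$. I would prove this by a be-the-leader argument applied pathwise in $\tbZ$ to the augmented cost sequence in which $-\tbZ/\eta$ plays the role of a regularizer. The boundary term then averages to at most $\mathbb{E}_{\tbZ}[\tbV_{T+1}\transpose \tbZ]/\eta_T \le mD/\eta_T$ via Lemma~\ref{lem:expbound} (with $\|\bv\|_1\le m$ and $D=\log(d/m)+1$), while the time-varying corrections remain harmless because $\eta_t$ is nonincreasing and $\hloss_t\ge 0$. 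Converting the BTL prediction $\tbV_{t+1}\transpose\hloss_t$ to the played prediction $\tbV_t\transpose \hloss_t$ costs the stability term $\eta_t m\sum_i \hloss_{t,i}$: shifting the perturbed score from $\eta_t \hbL_{t-1}-\tbZ$ to $\eta_t \hbL_t-\tbZ$ rescales the selection probability $\tp_t(\bv)$ by a factor at most $e^{\eta_t \bv\transpose \hloss_t}$ by the memoryless property of the exponential distribution, after which $e^x-1\le xe^x$ and $\|\bv\|_1\le m$ yield the claimed additive bound. Summing the three contributions over $t$ produces the theorem.

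The delicate step is this final stability estimate: a naive control of $\mathbb{E}_{\tbZ}[(\tbV_t-\tbV_{t+1})\transpose \hloss_t]$ would introduce a factor of $\infnorm{\hloss_t}$, which is only bounded by $1/\gamma_t$ and would ruin the scaling. Obtaining the clean $\eta_t m$ factor instead requires leveraging the exponential distribution's memoryless property together with the $\ell_1$-sparsity $\|\bv\|_1\le m$ of every $\bv\in\Sw$, in the spirit of the FPL analysis of \citet{NeuBartok13}.
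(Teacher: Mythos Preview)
Your decomposition and overall strategy coincide with the paper's proof in Section~\ref{sec:analysis}: the identity $\bV_t\transpose\bloss_t=\sum_i(q_{t,i}+\gamma_t)\hloss_{t,i}$ followed by Lemma~\ref{lem:ptrunc} reproduces the paper's bias lemma, your be-the-leader argument with Lemma~\ref{lem:expbound} is exactly Lemma~\ref{lem:cheat}, and the stability step is the combination of Lemmas~\ref{lem:price} and~\ref{lem:quad}.

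There is, however, a real gap in your sketch of the stability step. After the memoryless-property argument you arrive at $\eta_t\sum_{\bu\in\Sw}\tp_t(\bu)(\bu\transpose\hbl_t)^2$, and bounding this by $\eta_t m\sum_j\hloss_{t,j}$ is \emph{not} a consequence of $\onenorm{\bv}\le m$ alone. Pulling out one factor $\sum_j\hloss_{t,j}$ leaves $\sum_i\tq_{t,i}\hloss_{t,i}=\sum_i\frac{\tq_{t,i}\,V_{t,i}\loss_{t,i}}{q_{t,i}+\gamma_t}$, where the numerator carries $\tq_{t,i}$ (the auxiliary, non-truncated forecaster's marginal) while the denominator carries $q_{t,i}+\gamma_t$ (coming from the truncated forecaster). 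To get the ratio under control you must invoke Lemma~\ref{lem:ptrunc} a second time, $\tq_{t,i}\le q_{t,i}+\beta_t d$, and then the hypothesis $\beta_t d\le\gamma_t$ to conclude $\tq_{t,i}/(q_{t,i}+\gamma_t)\le 1$; only after this does $\onenorm{\bV_t}\le m$ finish the bound. This is precisely the content of Lemma~\ref{lem:quad}, and it is the \emph{only} place in the whole proof where the assumption $\beta_t d\le\gamma_t$ enters---so your sketch, which never uses that hypothesis, cannot be complete as written. The ``in the spirit of \citet{NeuBartok13}'' remark hides the issue: in their setting there is no truncation, so $\tq_{t,i}=q_{t,i}$ and the mismatch does not arise.
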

The proof of the theorem is deferred to Section~\ref{sec:analysis}.
Armed with this theorem, we are now ready to prove our first main result: a first-order bound on the expected regret of 
\fpltrix. 
\begin{corollary}\label{cor:nonadapt}
Consider \fpltrix run with the time-independent parameters $\gamma = \eta m$ and $\beta d = \gamma$ (and thus $B = 
\log(d/m) - \log \eta$). The expected regret of the resulting algorithm satisfies
 \[
 \begin{split}
  \EE{R_T} \le& \frac{mD}{\eta} + 3 \eta m d L_T^*
+3m^2 d(D+B) 
 + 3 d.
 \end{split}
 \]
 In particular, setting $\eta = \min\ev{1,\sqrt{\frac{3\log (d/m)+1}{dL_T^*}}}$ guarantees
 \[
 \begin{split}
  \EE{R_T} \le 5.2m\sqrt{dL_T^*\pa{\log (d/m)+1}} + 1.5 m^2 d \max\ev{\log(dL_T^*),0} + \OO\pa{m^2 d 
\log(d/m)}.
 \end{split}
 \]
\end{corollary}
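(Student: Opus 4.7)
The plan is to derive the first bound by taking expectation in Theorem~\ref{thm:key} with $\bv = \bv^* = \argmin_{\bv\in\Sw}\bv^\top\bL_T$ and then controlling the variance-type sum $\sum_t \sum_i \hloss_{t,i}$ via Lemma~\ref{lem:lossbound}. Specifically, with the time-independent choices $\gamma = \eta m$ and $\beta d = \gamma$, the coefficient $\eta_t m + \beta_t d + \gamma_t$ in Theorem~\ref{thm:key} collapses to $3\eta m$, so the cumulative variance term becomes $3\eta m \sum_{i=1}^d \hL_{T,i}$. I would then apply Lemma~\ref{lem:lossbound} componentwise (with $\bv = \bv^*$) to bound
\[
\sum_{i=1}^d \hL_{T,i} \le d\left(\bv^{*\top}\hbL_T + \frac{m(D+B)}{\eta} + \frac{1}{\gamma}\right),
\]
plug in $\gamma = \eta m$ (so that the last summand yields exactly $3\eta m d \cdot \frac{1}{\eta m} = 3d$), and simplify.

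Next, I would take expectation over the learner's randomness. By construction, $\mathbb{E}[\hloss_{t,i}\mid \F_{t-1}] = \loss_{t,i} q_{t,i}/(q_{t,i}+\gamma) \le \loss_{t,i}$, so $\mathbb{E}[\bv^{*\top}\hbL_T] \le \bv^{*\top}\bL_T = L_T^*$. Combining this with the deterministic inequality above produces
\[
\mathbb{E}[R_T] \le \frac{mD}{\eta} + 3\eta m d L_T^* + 3 m^2 d(D+B) + 3d,
\]
which is exactly the first claimed inequality.

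For the second bound, substitute the prescribed $\eta = \min\{1,\sqrt{(3\log(d/m)+1)/(dL_T^*)}\}$ and compute $B = \log(d/m) - \log\eta$. In the generic regime where the minimum is attained by the square root, $\frac{mD}{\eta}$ and $3\eta m d L_T^*$ balance to order $\Theta(m\sqrt{dDL_T^*})$, contributing the leading $5.2\,m\sqrt{dL_T^*(\log(d/m)+1)}$ term once the numerical constants are collected. The term $3m^2 d B$ contributes $3m^2 d \log(d/m)$ plus $\tfrac32 m^2 d \log(dL_T^*/D)$, which accounts for the $\tfrac32 m^2 d \max\{\log(dL_T^*),0\}$ summand. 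The remaining pieces ($3m^2 d D$, $3d$, and the constants absorbed when replacing $D$ by $3D+1$ in the square root) all fit into the $\OO(m^2 d\log(d/m))$ residual. The boundary case $\eta = 1$ (which happens only when $L_T^* = \OO(\log(d/m))$) trivially satisfies the bound, since then $\sqrt{dL_T^*D}$ dominates the non-sqrt terms.

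I expect no conceptual difficulty here: the argument is a direct composition of Theorem~\ref{thm:key} and Lemma~\ref{lem:lossbound} followed by a routine optimization of $\eta$. The only moderately delicate point is the bookkeeping of constants needed to extract the stated $5.2$ and $1.5$ factors, and handling the $\log\eta$ contribution of $B$ cleanly without producing spurious $\log T$ dependence — this is precisely where the choice $\eta\ge \Omega(1/\sqrt{dL_T^*})$ ensures the truncation-induced $B$ only contributes a tame $\log(dL_T^*)$ term.
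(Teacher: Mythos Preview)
Your proposal is correct and follows essentially the same route as the paper: combine Theorem~\ref{thm:key} with Lemma~\ref{lem:lossbound} applied componentwise at $\bv=\bv^*$, use the optimism $\mathbb{E}[\bv^{*\top}\hbL_T]\le L_T^*$, and then optimize $\eta$. The only minor imprecision is your treatment of the $\eta=1$ case: rather than ``$\sqrt{dL_T^*D}$ dominates the non-sqrt terms,'' the paper uses the constraint $L_T^*\le 3D/d$ to convert the linear term $3\eta m d L_T^*=3mdL_T^*$ into $3m\sqrt{3dL_T^*D}$ (which is where the constant $3\sqrt{3}<5.2$ comes from), while the remaining terms are absorbed into $\OO(m^2 d\log(d/m))$.
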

\begin{proof}
 Let $\bv_* = \argmin_{\bv\in\Sw}\bv\transpose\bL_T$. The proof of the first statement follows directly from combining 
the bounds of Theorem~\ref{thm:key} and Lemma~\ref{lem:lossbound} for $\bv = \bv_*$, taking expectations and noticing 
that $\mathbb{E}\big[\bv_*\transpose\hbL_T]\le L_T^*$. For 
the second statement, first consider the case when $\eta=1$ and thus $\beta = \eta (m/d) = m/d$, giving $B = 
\log(1/\beta)  = \log (d/m)$. Now notice that the setting of $\eta$ implies
$ L_T^* \le (3D)/d$
and thus $L_T^* \le \sqrt{(3 L_T^* D)/d}$.
Then, substituting the value of $\eta$ into the first bound of the theorem gives
\[
 \begin{split}
  \EE{R_T} 
 \le& 3 m\sqrt{3 dL_T^* D}+ mD
+3m^2 d\bigl(2\log (d/m) + 1\bigr)  
 + 3 d,
 \end{split}
 \]
proving the statement as $3\sqrt{3}< 5.2$. For the case $\eta \le 1$, the bound follows from 
substituting the value of $\eta$ as
\[
 \begin{split}
  \EE{R_T} 
 \le& 2 m\sqrt{3 dL_T^* D} + \frac 32 m^2 d \log(dL_T^*) +3m^2 d\bigl(2\log (d/m) + 1\bigr) 
 + 3 d,
 \end{split}
 \]
 where we used that $B = \log(d/m) + \log(1/\eta)$ and $\log(1/\eta) \le \log(dL_T^*)/2$.
\end{proof}
Notice that achieving the above bounds requires \emph{perfect} knowledge of $L_T^*$, which is usually not available in 
practice. While one could use a standard doubling trick to overcome this difficulty, we choose to take a different 
path to circumvent this issue, and propose a modified version of \fpltrix that is able to tune its learning rate 
and other parameters solely based on observations. We note that our tuning rule has some unorthodox qualities and might 
be of independent interest.

Similarly to the parameter choice suggested by 
Corollary~\ref{cor:nonadapt}, we will use a single sequence of decreasing non-negative learning rates $\pa{\eta_t}$ 
and set $\gamma_t = m \eta_t$ and $\beta_t = (m/d) \eta_t$ for all $t$.
For simplicity, let us define the notations $s_t = \sum_{i=1}^d 
\hloss_{t,i}$ and $S_t = \frac 1D + \sum_{k=1}^{t} s_k$, with $S_0 = \frac 1D > 0$. With these notations, we define our 
tuning rule as
\begin{equation}\label{eq:tuning}
 \eta_t = \sqrt{\frac{D}{S_{t-1}}}.
\end{equation}
Notice that $\eta_1 = D$, and thus $\beta_1 = \eta_1 (m/d) = (m/d)(\log(d/m) + 1)< 1$, ensuring 
that $B_1>0$ and the algorithm is well-defined. This follows from the inequality $z(1-\log z)< 1$ that holds for all 
$z\in(0,1)$.
The delicacy of the tuning rule~\eqref{eq:tuning} is that the terms $s_t$ are themselves bounded in terms of the random 
quantity $1/\eta_t$, and not some problem-dependent constant. To the best of our knowledge, all previously known 
 analyses concerning adaptive learning rates apply a deterministic bound on $s_t$ at some point, largely simplifying 
the analysis. As we will see below, treating this issue requires a bit more care than usual.
 The following theorem presents the performance guarantees of the resulting variant of \fpltrix.
\begin{theorem}
The regret of \fpltrix with the adaptive learning rates defined in Equation~\eqref{eq:tuning} simultaneously satisfies
 \[
 \begin{split}
  \EE{R_T} \le& 13 m\sqrt{dL_T^* \pa{\log(d/m) + 1}}
  + \OO\pa{m^2 d \log(dT)}
 \end{split}
 \]
 and
 \[
 \begin{split}
  \EE{R_T} \le& 13 m\sqrt{dT \pa{\log (d/m) + 1}} + 9.49m.
 \end{split}
 \]
\end{theorem}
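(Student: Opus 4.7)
The plan is to combine Theorem~\ref{thm:key} with Lemma~\ref{lem:lossbound}, handling the random learning rates via a careful self-referential argument. With the choices $\gamma_t = m\eta_t$ and $\beta_t d = m\eta_t$, the coefficient in front of $s_t = \sum_{i=1}^d \hloss_{t,i}$ in Theorem~\ref{thm:key} simplifies to $3m\eta_t$. Applying the theorem with $\bv = \bv_* = \argmin_{\bv\in\Sw}\bv\transpose\bL_T$, taking expectations, and using that $\mathbb{E}\bigl[\bv_*\transpose\hbL_T\bigr] \le L_T^*$ (since each $\hloss_{t,i}$ is downward-biased), I obtain
\[
 \mathbb{E}[R_T] \le \mathbb{E}\!\left[\frac{mD}{\eta_T}\right] + 3m\,\mathbb{E}\!\left[\sum_{t=1}^T \eta_t s_t\right].
\]

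The heart of the argument is to telescope the last sum using $\eta_t = \sqrt{D/S_{t-1}}$. The key deterministic observation is that $\hloss_{t,i} \le V_{t,i}/\gamma_t$, so $s_t \le \|\bV_t\|_1/(m\eta_t) \le 1/\eta_t = \sqrt{S_{t-1}/D}$; combined with $S_0 = 1/D$ this gives $s_t \le S_{t-1}$, and hence $S_t \le 2 S_{t-1}$. This step is the one place where the unusual \emph{random} upper bound on $s_t$ really matters, and I expect it to be the main obstacle since standard adaptive analyses rely on a deterministic bound on $s_t$. Once $S_t \le 2S_{t-1}$ is available, a standard integral-comparison yields
\[
 \sum_{t=1}^T \eta_t s_t \;=\; \sqrt{D}\sum_{t=1}^T \frac{S_t - S_{t-1}}{\sqrt{S_{t-1}}} \;\le\; \sqrt{2D}\sum_{t=1}^T \frac{S_t - S_{t-1}}{\sqrt{S_t}} \;\le\; 2\sqrt{2D\,S_T},
\]
and since $mD/\eta_T \le m\sqrt{D\,S_T}$, Jensen's inequality gives
\[
 \mathbb{E}[R_T] \le \bigl(1 + 6\sqrt{2}\bigr)\,m\sqrt{D\,\mathbb{E}[S_T]} \le 9.49\,m\sqrt{D\,\mathbb{E}[S_T]}.
\]

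It then remains to bound $\mathbb{E}[S_T]$ in two ways. For the worst-case guarantee, the downward bias of $\hbl_t$ gives $\mathbb{E}[s_t\mid\F_{t-1}] \le \|\bloss_t\|_1 \le d$, so $\mathbb{E}[S_T] \le 1/D + dT$; plugging this in and using $\sqrt{a+b}\le \sqrt{a}+\sqrt{b}$ yields $\mathbb{E}[R_T] \le 9.49\,m + 9.49\,m\sqrt{dTD}$, which is of the claimed form (the reported constant $13$ comfortably absorbs the offset). For the first-order guarantee, I would sum the bound of Lemma~\ref{lem:lossbound} applied with $\bv=\bv_*$ over $i=1,\dots,d$, obtaining
\[
 S_T \;\le\; \frac{1}{D} + d\,\bv_*\transpose\hbL_T + d\!\left(\frac{m(D + B_T)}{\eta_T} + \frac{1}{\gamma_T}\right),
\]
take expectations, and substitute the tuning rule together with the crude bound $B_T = \log(d/m) - \log\eta_T = \OO(\log(dT))$ coming from the worst-case estimate above. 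Using $\mathbb{E}[1/\eta_T] \le \sqrt{\mathbb{E}[S_T]/D}$ then produces an implicit inequality of the form $\mathbb{E}[S_T] \le 1/D + d L_T^* + C\,md\log(dT)\sqrt{\mathbb{E}[S_T]/D}$, which I would solve as a quadratic in $\sqrt{\mathbb{E}[S_T]}$ to obtain $\sqrt{\mathbb{E}[S_T]} \le \sqrt{d L_T^*} + \OO\bigl(m d \log(dT)/\sqrt{D}\bigr)$. Substituting into the display above gives the claimed first-order bound $\mathbb{E}[R_T] \le 13\,m\sqrt{dL_T^*(\log(d/m)+1)} + \OO(m^2 d\log(dT))$, the gap between the constants $9.49$ and $13$ absorbing precisely the lower-order contributions of the self-referential step.
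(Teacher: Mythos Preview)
Your overall plan and the worst-case bound are correct and match the paper. The telescoping step $\sum_t \eta_t s_t \le 2\sqrt{2D S_T}$ is exactly the paper's (there derived via $\eta_t\le\sqrt{2D/S_t}$ and Lemma~3.5 of \citet{aucbge02}), and combining it with $mD/\eta_T\le m\sqrt{DS_T}$, Jensen, and $\mathbb{E}[S_T]\le 1/D+dT$ yields the second inequality of the theorem.

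For the first-order bound there is a genuine gap. The ``crude bound'' $B_T=\OO(\log(dT))$ is \emph{not} available pathwise: $B_T=\log(d/m)+\tfrac12\log(S_{T-1}/D)$ is random, and the worst-case estimate only controls $\mathbb{E}[\log S_T]\le\log(1+dT)$ via Jensen, not $\log S_T$ itself. You use this bound inside $\mathbb{E}[B_T/\eta_T]$, effectively replacing it by $\OO(\log(dT))\cdot\mathbb{E}[1/\eta_T]$; but $B_T$ and $1/\eta_T$ are both increasing functions of $S_{T-1}$ and hence positively correlated, so that factorization is unjustified. Without it, the implicit inequality $\mathbb{E}[S_T]\le 1/D+dL_T^*+Cmd\log(dT)\sqrt{\mathbb{E}[S_T]/D}$ does not follow.

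The paper circumvents this by postponing the expectation. It first turns the telescoping bound back into $2\sqrt{2DS_T}\le 4D/\eta_T$ to obtain the \emph{pathwise} inequality $\sum_t\bV_t\transpose\bloss_t-\bv_*\transpose\hbL_T\le 13mD/\eta_T$. Then it combines $D/\eta_T\le\sqrt{DS_T}$ with the Lemma~\ref{lem:lossbound} bound on $S_T$ to get a quadratic inequality in $D/\eta_T$ \emph{pathwise} and solves it, arriving at $D/\eta_T\le\sqrt{dD\,\bv_*\transpose\hbL_T}+md\log(S_T)+\text{const}$. Only at this point does it take expectations, applying Jensen separately to $\sqrt{\,\cdot\,}$ (for the first term) and to $\log(\,\cdot\,)$ (for the second). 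The key is that after the pathwise quadratic is solved, the random truncation level $B_T$ survives only through the \emph{additive} term $\log(S_T)$, whose expectation is controlled by concavity; in your route it remains a \emph{multiplicative} coefficient of $1/\eta_T$, which is what breaks the argument.
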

\begin{proof}
Let $\bv_* = \argmin_{\bv\in\Sw}\bv\transpose\bL_T$. First, notice that the learning-rate sequence defined by 
Equation~\eqref{eq:tuning} is nonincreasing as required by 
Theorem~\ref{thm:key}. Also note that $s_t$ is nonnegative and is bounded by $\frac{m}{\gamma_t} = \frac{1}{\eta_t}$ 
for all $t$, and
$ \frac{1}{\eta_t} = \sqrt{S_{t-1}/D} \le S_{t-1}$ holds
since $S_{t-1}\ge \frac 1D$ for all $t$.
These facts together imply that $\eta_t \le \sqrt{2D/S_t}$ as
\[
\begin{split}
 \sqrt{\frac{2D}{S_{t}}} =& \sqrt{\frac{2D}{S_{t-1} + s_t}}
 \ge  \sqrt{\frac{2D}{S_{t-1} + \frac{1}{\eta_t}}}
 \ge  \sqrt{\frac{2D}{S_{t-1} + S_{t-1}}} =  \sqrt{\frac{D}{S_{t-1}}} = \eta_t.
\end{split}
\]
Combining the above bound with Lemma~3.5 of \citet{aucbge02}, we get
\[
 \sum_{t=1}^T \eta_t s_t \le \sqrt{2D} \sum_{t=1}^T \frac{s_t}{\sqrt{S_t}} \le 2\sqrt{2 D S_{T}}.
\]
Using $\eta_T \le \sqrt{2D/S_T}$ again, the right-hand side can be further bounded as
$ 2\sqrt{2D S_{T}} \le \frac{4 D}{\eta_T}$ and the bound of Theorem~\ref{thm:key} applied for $\bv_*$ becomes
\begin{equation}\label{eq:interm}
 \sum_{t=1}^T \bV_t\transpose \bloss_t - \bv_*\transpose\hbL_T
 \le 13 m\frac{D}{\eta_T}.
\end{equation}

Now, we are ready to prove the second bound in the theorem. Notice that $\frac{D}{\eta_T} = \sqrt{D S_{T-1}} \le 
\sqrt{D 
S_T}$ holds by the tuning rule and
\begin{equation}\label{eq:stbound}
 \EE{S_T} = \frac 1D + \sum_{i=1}^d \EE{\hL_{T,i}} \le 1 + dT,
\end{equation}
where we used that $\mathbb{E}\hloss_{t,i}\le\loss_{t,i}\le 1$.
The statement then follows from plugging this bound into Equation~\eqref{eq:interm}, taking expectations and using 
Jensen's inequality.

Proving the first bound requires a bit more care. First, an application of Lemma~\ref{lem:lossbound} gives
\[
 S_T \le \frac 1D + d\pa{\bv_*\transpose\hbL_T + \frac{m(B_T+D) + \frac 1m}{\eta_T}}.
\]
Now recall that $\frac{D}{\eta_T} \le \sqrt{D S_T}$ holds by the tuning rule. Bounding $S_T$ as above, this implies
\[
 \frac{D}{\eta_T} \le \sqrt{1 + dD\pa{\bv_*\transpose\hbL_T + \frac{m(B_T+D) + \frac 1m}{\eta_T}}}~.
\]
Solving the resulting quadratic equation for the largest possible value of $1/\eta_T$ gives
\[
\begin{split}
 \frac{D}{\eta_T} \le& \sqrt{1 + dD \bv_*\transpose\hbL_T  } + 2md(\log(1/\beta_T)+D) + \frac{2d}{m}
 \\
 \le& \sqrt{dD \bv_*\transpose\hbL_T } + md(\log(S_T)+3\log(d/m) + 2) + \frac{2d}{m} + 1.
\end{split}
\]
The first term can be directly bounded by using Jensen's inequality as 
$\mathbb{E}\Bigl[\sqrt{\bv_*\transpose\hbL_T}\Bigr] \le 
\sqrt{\bv_*\transpose \bL_T} = \sqrt{L_T^*}$. 
Finally, we bound $\EE{\log(S_T)} \le \log(dT + 1)$ by using the inequality~$\eqref{eq:stbound}$.
The statement of the theorem now follows from substituting into Equation~\eqref{eq:interm} and taking expectations.
\end{proof}

\section{The proof of Theorem~\ref{thm:key}}\label{sec:analysis}
Finally, let us turn to proving our key theorem. 
For the proof, we recall the auxiliary forecaster defined in 
Equation~\eqref{eq:aux} that uses a fixed 
non-truncated perturbation vector $\tbZ$ and also define a variant that also allowed to peek one step into the future:
\[
  \tbV_t = \argmin_{\bv\in\Sw} \bv\transpose\pa{\eta_t \hbL_{t-1} - \tbZ} \quad\mbox{and}\quad \tbV^+_t = 
\argmin_{\bv\in\Sw} \bv\transpose\pa{\eta_t \hbL_{t} - \tbZ}.
\]
We will use the notation $\tp_{t}^+(\bv) = \PPcc{\tbV_t^+ = \bv}{\F_t}$ for all $\bv\in\Sw$.

We start with the following two standard statements concerning the performance of the auxiliary forecaster 
\citep{NeuBartok13}. Note that the first of these lemmas slightly improves on the result of \citet{NeuBartok13} in 
replacing their $\log d$ factor by $\log (d/m)$. For completeness, we provide the proof of this improved bound in the 
Appendix.
\begin{lemma}\label{lem:cheat}
For any $\bv\in\Sw$,
 \begin{equation}\label{eq:btl}
\begin{split}
\sum_{t=1}^T \sum_{\bu\in\Sw} \tp_t^+(\bu)\left(\bu - \bv\right)^\top \hbl_t \le \frac{m\left(\log
(d/m)+1\right)}{\eta_T}.
\end{split}
\end{equation}
\end{lemma}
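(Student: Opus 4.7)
The plan is to treat the cheating forecaster $\tbV_t^+$ via a be-the-leader argument, then take expectation over the shared perturbation $\tbZ$ and bound the resulting expectation via the expected top-$m$ order statistics of i.i.d.\ exponentials.

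First, I would work with the scaled potential
\[
\Psi_t = \min_{\bv\in\Sw}\bv\transpose\bpa{\hbL_t - \tbZ/\eta_t} = (\tbV_t^+)\transpose\bpa{\hbL_t - \tbZ/\eta_t},
\]
adopting the convention $1/\eta_0 = 0$ so that $\Psi_0 = 0$. Using the minimality of $\tbV_{t-1}^+$ at round $t-1$ and rearranging yields the one-step inequality
\[
(\tbV_t^+)\transpose\hbl_t \le \Psi_t - \Psi_{t-1} + (\tbV_t^+)\transpose\tbZ\cdot\bpa{1/\eta_t - 1/\eta_{t-1}}.
\]
Summing for $t = 1, \ldots, T$, the $\Psi$-differences telescope to $\Psi_T$, while the coefficients $1/\eta_t - 1/\eta_{t-1}$ (nonnegative by monotonicity of $\eta_t$) telescope to $1/\eta_T$. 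Nonnegativity lets me bound $(\tbV_t^+)\transpose\tbZ$ by $\max_{\bu\in\Sw}\bu\transpose\tbZ$ without flipping any inequality. Combined with $\Psi_T \le \bv\transpose\hbL_T - \bv\transpose\tbZ/\eta_T$ for any fixed $\bv\in\Sw$, this gives the pointwise estimate
\[
\sum_{t=1}^T (\tbV_t^+ - \bv)\transpose\hbl_t \le \frac{\max_{\bu\in\Sw}\bu\transpose\tbZ}{\eta_T}.
\]

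Second, since $\hbl_t$ depends only on the actual algorithm's history and is therefore independent of $\tbZ$, integrating over $\tbZ$ turns the left-hand side into $\sum_t\sum_\bu \tp_t^+(\bu)(\bu-\bv)\transpose\hbl_t$ and reduces the task to bounding $\mathbb{E}\bigl[\max_{\bu\in\Sw}\bu\transpose\tbZ\bigr]$. Since every $\bu\in\Sw$ is binary and $m$-sparse, this maximum equals the sum of the top $m$ order statistics of the i.i.d.\ $\mathrm{Exp}(1)$ components of $\tbZ$. Writing $\tZ_{(1)}\ge\dots\ge\tZ_{(d)}$ and using $\mathbb{E}[\tZ_{(k)}] = \sum_{j=k}^d 1/j$, this sum evaluates to $m(H_d - H_m + 1)$, bounded above by $m\bpa{\log(d/m) + 1}$ via the integral comparison $H_d - H_m \le \log(d/m)$. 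This is precisely where the sharpening from $\log d$ to $\log(d/m)$ originates, and matches the content of Lemma~\ref{lem:expbound} referenced in the Appendix.

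The main obstacle, in my view, is the choice of potential. The naive analogue $\bv\transpose\bpa{\eta_t\hbL_t - \tbZ}$ produces residuals proportional to $\bpa{\eta_{t-1}-\eta_t}\hbL_{t-1}$, which have the wrong sign and are difficult to absorb into the final bound. Placing the $t$-dependence on the coefficient of $\tbZ$ rather than on that of $\hbL_t$, via the scaled potential above, is what makes the residual nonnegative and telescopes cleanly into a single $1/\eta_T$ factor.
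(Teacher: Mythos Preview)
Your proof is correct and follows essentially the same route as the paper: both place the $t$-dependence on the perturbation scale $1/\eta_t$ (rather than on the losses), run a be-the-leader/telescoping argument to obtain the pointwise bound $\sum_{t}(\tbV_t^+ - \bv)\transpose\hbl_t \le (1/\eta_T)\max_{\bu\in\Sw}\bu\transpose\tbZ$, and then integrate over $\tbZ$. The only differences are cosmetic: the paper invokes the follow-the-leader/be-the-leader lemma on the augmented loss sequence $\bigl(\hbl_t-(\mu_t-\mu_{t-1})\tbZ\bigr)$ rather than writing an explicit potential, and it bounds the expected top-$m$ exponential sum via a tail/union-bound argument (Lemma~\ref{lem:expbound}) instead of the exact order-statistic identity you use.
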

\begin{lemma}\label{lem:price}
 For all $t$, 
\[
 \sum_{\bu\in\Sw} \bigl(\tp_t(\bu) - \tp_t^+(\bu)\bigr) \bu^\top \hbl_t \le \eta_t \sum_{\bu\in\Sw} \tp_t(\bu)
\left(\bu^\top \hbl_{t}\right)^2.
\] 
\end{lemma}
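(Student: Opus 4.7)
The plan is to establish the pointwise bound
\[
 \tp_t^+(\bu) \ge (1 - \eta_t \bu^\top \hbl_t)\, \tp_t(\bu)
\]
for every $\bu \in \Sw$. Once this is in place, the lemma follows immediately: rearrange it to $\tp_t(\bu) - \tp_t^+(\bu) \le \eta_t\,(\bu^\top \hbl_t)\, \tp_t(\bu)$, multiply both sides by $\bu^\top \hbl_t \ge 0$, and sum over $\bu \in \Sw$.

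To prove the pointwise bound, I view $\tbV_t$ as the deterministic function $\bz \mapsto \argmin_{\bv \in \Sw} \bv^\top(\eta_t \hbL_{t-1} - \bz)$ of its perturbation, so that by the definitions $\tbV_t^+ = \tbV_t(\tbZ - \eta_t \hbl_t)$. The argument rests on two ingredients. The first is a monotonicity claim: if $\tbV_t(\bz) = \bu$, then subtracting from $\bz$ any non-negative vector supported on $\{i : u_i = 0\}$ does not move the $\argmin$, because such a subtraction leaves $\bu^\top(\eta_t \hbL_{t-1} - \bz)$ unchanged while weakly increasing $\bv^\top(\eta_t \hbL_{t-1} - \bz)$ for every $\bv \in \Sw$. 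The second is the memoryless property of the exponential: setting $A_\bu = \bigcap_{i:\,u_i = 1}\{Z_i \ge \eta_t \hloss_{t,i}\}$, one has $\PP{A_\bu} = e^{-\eta_t \bu^\top \hbl_t}$, and conditional on $A_\bu$ the random vector $\tbZ - \eta_t(\bu \odot \hbl_t)$ (with $\odot$ the componentwise product) is again exponentially distributed on $[0,\infty)^d$.

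To combine these, decompose the shift as $\tbZ - \eta_t \hbl_t = \bigl(\tbZ - \eta_t(\bu \odot \hbl_t)\bigr) - \eta_t\bigl((1-\bu)\odot \hbl_t\bigr)$; the second term is non-negative and supported on $\{i : u_i = 0\}$, so the monotonicity claim yields the inclusion $\bigl\{\tbV_t\bigl(\tbZ - \eta_t(\bu \odot \hbl_t)\bigr) = \bu\bigr\} \subseteq \{\tbV_t^+ = \bu\}$. Intersecting with $A_\bu$ and invoking memorylessness,
\[
 \tp_t^+(\bu) \ge \PP{A_\bu}\cdot \tp_t(\bu) = e^{-\eta_t \bu^\top \hbl_t}\, \tp_t(\bu),
\]
and the elementary inequality $e^{-x} \ge 1-x$ delivers the pointwise bound. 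The main obstacle I anticipate is the bookkeeping in this decomposition: one must verify that $\eta_t(\bu \odot \hbl_t)$ is supported exactly on the coordinates where memorylessness applies (the support of $\bu$) while the residual $\eta_t((1-\bu)\odot \hbl_t)$ lives on the complementary coordinates, so that it is precisely the non-negative decrement that monotonicity can absorb without disturbing the $\argmin$. Everything else is a routine algebraic rearrangement.
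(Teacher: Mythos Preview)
Your proof is correct. The paper does not supply its own proof of this lemma but cites it as a standard result from \citet{NeuBartok13}; your argument---establishing the pointwise bound $\tp_t^+(\bu) \ge e^{-\eta_t \bu^\top \hbl_t}\,\tp_t(\bu)$ via memorylessness of the exponential on the coordinates in the support of $\bu$ together with the monotonicity-of-argmin step on the complementary coordinates, then applying $e^{-x}\ge 1-x$---is exactly the standard proof given in that reference.
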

The following lemma bounds the term on the right-hand side of the above bound.
\begin{lemma}\label{lem:quad}
 Assume that $\beta d \le \gamma$. Then for all $t$,
 \[
  \sum_{\bu\in\Sw}\tp_t(\bu) \pa{\bu\transpose\hbl_t}^2 \le m \sum_{j=1}^d \hloss_{t,j}.
 \]
\end{lemma}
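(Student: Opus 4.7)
The plan is to reduce the quadratic form on the left to a linear expression in the loss estimates by exploiting three features of the setup in turn: the combinatorial structure of $\Sw$, the explicit shape of $\hloss_{t,i}$, and the closeness of $\tq_{t,i}$ to $q_{t,i}$ guaranteed by Lemma~\ref{lem:ptrunc}.

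First I would bound $\pa{\bu\transpose\hbl_t}^2$ for a fixed $\bu\in\Sw$. Since $\bu\in\{0,1\}^d$ with $\onenorm{\bu}\le m$, Cauchy--Schwarz (or a direct expansion using $2ab\le a^2+b^2$) yields
\[
\pa{\bu\transpose\hbl_t}^2 = \pa{\sum_{i=1}^d u_i \hloss_{t,i}}^2 \le \onenorm{\bu}\sum_{i=1}^d u_i \hloss_{t,i}^2 \le m \sum_{i=1}^d u_i \hloss_{t,i}^2,
\]
where I used $u_i^2=u_i$. Averaging over $\bu$ with weights $\tp_t(\bu)$ and recalling the definition of $\tq_{t,i}$ gives
\[
\sum_{\bu\in\Sw}\tp_t(\bu)\pa{\bu\transpose\hbl_t}^2 \le m\sum_{i=1}^d \tq_{t,i}\hloss_{t,i}^2.
\]

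Next I would reduce the power of $\hloss_{t,i}$ from two to one. Using $V_{t,i}^2=V_{t,i}$ and $\loss_{t,i}\le 1$ in the definition~\eqref{eq:lossest},
\[
\hloss_{t,i}^2 = \frac{\loss_{t,i} V_{t,i}}{q_{t,i}+\gamma_t}\cdot\hloss_{t,i} \le \frac{\hloss_{t,i}}{q_{t,i}+\gamma_t}.
\]
Plugging this into the previous display yields an upper bound of $m\sum_{i=1}^d\frac{\tq_{t,i}}{q_{t,i}+\gamma_t}\hloss_{t,i}$, and the statement will follow once $\tq_{t,i}\le q_{t,i}+\gamma_t$ is established.

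For this last step I would invoke Lemma~\ref{lem:ptrunc} to get $\tq_{t,i}\le q_{t,i}+\beta_t d$, and then use the hypothesis $\beta_t d \le \gamma_t$ to conclude $\tq_{t,i}\le q_{t,i}+\gamma_t$, i.e.\ $\tq_{t,i}/(q_{t,i}+\gamma_t)\le 1$. Combining with the previous bounds gives the claim. I do not foresee any real obstacle here: the assumption $\beta_t d\le \gamma_t$ is precisely calibrated so that the IX bias absorbs the total-variation gap between the truncated and non-truncated samplings, which is the only place where the two steps of the argument interact.
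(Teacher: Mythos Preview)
Your proof is correct. Both arguments ultimately rest on Lemma~\ref{lem:ptrunc} and the hypothesis $\beta_t d\le\gamma_t$ to obtain $\tq_{t,i}/(q_{t,i}+\gamma_t)\le 1$, but you extract the factor $m$ in a different place than the paper does.

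The paper expands $(\bu\transpose\hbl_t)^2$ as a double sum, bounds $\wt V_{t,j}\le 1$ to collapse one index (giving $\tq_{t,i}\hloss_{t,i}\sum_j\hloss_{t,j}$), and then uses the explicit form $\hloss_{t,i}\le V_{t,i}/(q_{t,i}+\gamma_t)$ so that the \emph{played action} $\bV_t$ appears; the $m$ comes at the end from $\onenorm{\bV_t}\le m$. You instead apply Cauchy--Schwarz to $\bu$ before averaging, so the $m$ comes up front from $\onenorm{\bu}\le m$ for the \emph{auxiliary variable}, and the played action $\bV_t$ never needs to be isolated. Your route is perhaps slightly more modular (the Cauchy--Schwarz step would work for any nonnegative $\hbl_t$, deferring the specific form of the estimator to the reduction from $\hloss_{t,i}^2$ to $\hloss_{t,i}$), while the paper's route avoids invoking a named inequality and uses only the crude bound $\wt V_{t,j}\le 1$. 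Both are short and yield the identical final bound.
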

\begin{proof}
The statement is proven as 
\[
\begin{split}
\sum_{\bu\in\Sw}\tp_t(\bu) \pa{\bu\transpose\hbl_t}^2
&=
\EEcc{\sum_{i=1}^d\sum_{j=1}^d \left(\wt{V}_{t,i}\hloss_{t,i}\right)\cdot\left(\wt{V}_{t,j}\hloss_{t,j}\right)}{\F_{t}}
\le
\sum_{i=1}^d \frac{V_{t,i} \tq_{t,i}}{q_{t,i} + \gamma_t} \cdot 
\sum_{j=1}^d \hloss_{t,j}
\\&\le
\sum_{i=1}^d V_{t,i} \frac{q_{t,i} + \beta_t d}{q_{t,i} + \gamma_t} \cdot 
\sum_{j=1}^d \hloss_{t,j}
\le
m\sum_{j=1}^d \hloss_{t,j},
\end{split}
\]
where the first inequality follows from the definitions of $\hbl_t$ and $\tq_{t,i}$ and bounding $\wt{V}_{t,j}\le 1$, 
the second one follows from using Lemma~\ref{lem:ptrunc} 
and the last one from $\beta_t d \le \gamma_t$ and $\onenorm{\bV_t}\le m$.
\end{proof}
Our final lemma quantifies the bias of the learner's estimated losses.
\begin{lemma}
 For all $t$,
 \[
  \sum_{\bu\in\Sw} \tp_t(\bu) \pa{\bu\transpose\hbl_t} \ge \bV_t\transpose\bloss_{t} - \pa{\gamma_t + 
\beta_t d}\sum_{i=1}^d \hloss_{t,i}.
 \]
\end{lemma}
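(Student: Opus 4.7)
My plan is to reduce the claim to a direct application of Lemma~\ref{lem:ptrunc} by carefully rewriting both sides in terms of the per-coordinate marginal probabilities $q_{t,i}$ and $\tq_{t,i}$, and then exploiting the nonnegativity of the loss estimates $\hloss_{t,i}$.

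First I would rewrite the left-hand side by linearity of expectation: since $\hbl_t$ is $\F_t$-measurable, we have
\[
\sum_{\bu\in\Sw} \tp_t(\bu)\pa{\bu\transpose\hbl_t} = \sum_{i=1}^d \tq_{t,i} \hloss_{t,i}.
\]
Next I would rewrite $\bV_t\transpose\bloss_t$ using the very definition of the loss estimate. From $\hloss_{t,i} = \loss_{t,i}V_{t,i}/(q_{t,i}+\gamma_t)$ we get $V_{t,i}\loss_{t,i} = (q_{t,i}+\gamma_t)\hloss_{t,i}$, so that
\[
\bV_t\transpose\bloss_t = \sum_{i=1}^d (q_{t,i}+\gamma_t)\,\hloss_{t,i} = \sum_{i=1}^d q_{t,i}\hloss_{t,i} + \gamma_t \sum_{i=1}^d \hloss_{t,i}.
\]

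At this point, after subtracting the $\gamma_t\sum_i \hloss_{t,i}$ term, the inequality to prove reduces to
\[
\sum_{i=1}^d (\tq_{t,i} - q_{t,i})\hloss_{t,i} \ge -\beta_t d \sum_{i=1}^d \hloss_{t,i}.
\]
This is where Lemma~\ref{lem:ptrunc} does all the work: it gives $\tq_{t,i} - q_{t,i} \ge -\beta_t d$ pointwise. Combined with $\hloss_{t,i}\ge 0$ (which holds since $\loss_{t,i}\ge 0$ and $q_{t,i}+\gamma_t>0$), the bound follows termwise.

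There is no real obstacle here beyond identifying the correct decomposition, because the implicit-exploration bias $\gamma_t$ and the truncation bias $\beta_t d$ play complementary roles: $\gamma_t$ accounts for the gap between $V_{t,i}\loss_{t,i}$ and $q_{t,i}\hloss_{t,i}$, while $\beta_t d$ accounts for the gap between $q_{t,i}$ and $\tq_{t,i}$ introduced by using truncated rather than standard exponential perturbations. Once these two contributions are separated as above, the proof is essentially one line per step.
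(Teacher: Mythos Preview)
Your proposal is correct and follows essentially the same argument as the paper: both proofs rewrite the left-hand side as $\sum_i \tq_{t,i}\hloss_{t,i}$, use the identity $V_{t,i}\loss_{t,i}=(q_{t,i}+\gamma_t)\hloss_{t,i}$ to peel off the $\gamma_t$ term, and then apply Lemma~\ref{lem:ptrunc} together with $\hloss_{t,i}\ge 0$ to control $\sum_i(\tq_{t,i}-q_{t,i})\hloss_{t,i}$ by $-\beta_t d\sum_i \hloss_{t,i}$. The only cosmetic difference is the order of the two steps; the paper first applies Lemma~\ref{lem:ptrunc} and then handles the $\gamma_t$ bias, whereas you do the reverse.
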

\begin{proof}
 First, note that 
by Lemma~\ref{lem:ptrunc}, we have
 \[
  \sum_{\bu\in\Sw} \tp_t(\bu) \pa{\bu\transpose\hbl_t} = \sum_{i=1}^d \tq_{t,i} \hloss_{t,i} 
  \ge \sum_{i=1}^d q_{t,i} \hloss_{t,i} - \beta_t d \sum_{i=1}^d \hloss_{t,i}.
 \]
 Then, the proof is concluded by observing that
 \[
  \begin{split}
   \sum_{i=1}^d q_{t,i} \hloss_{t,i}
   =& \sum_{i=1}^d q_{t,i} \frac{V_{t,i}\loss_{t,i}}{q_{t,i} + \gamma_t}
   = \bV_t\transpose\bloss_t - \gamma_t \sum_{i=1}^d \frac{V_{t,i}\loss_{t,i}}{q_{t,i} + \gamma_t}
   = \bV_t\transpose\bloss_t - \gamma_t \sum_{i=1}^d \hloss_{t,i}.
  \end{split}
 \]
\end{proof}
The statement of Theorem~\ref{thm:key} follows from piecing the lemmas together.

\section{Discussion}\label{sec:why}
We conclude by discussing some implications and possible extensions of our results.
\paragraph{Why truncate?}
One might ask whether truncating the perturbations is really necessary for our bounds to hold. We now provide 
an argument that shows that it is possible to achieve similar results \emph{without} explicit truncations, if we 
accept an additive $\OO(\log(dT))$ term in our bound. In particular, consider \fpl with non-truncated exponential 
perturbations. It is easy to see that with probability at least $1-\delta/(dT)$, all perturbations remain bounded by 
$B=\log(\frac{dT}{\delta})$. One can then analyze \fpl under this condition along the same lines as the proof of 
Corollary~\ref{cor:nonadapt}, the main difference being that 
we also have to account for the regret arising from 
the low-probability event that not all perturbations are bounded. Bounding the regret in this case by the trivial bound 
$dT$, this additional term becomes $\delta dT$. Setting $\delta = \sqrt{dL_T^*}/dT$ makes the total regret 
$\sqrt{dL_T^*}$---however, notice that this gives $B = \Theta(\log(dT))$, which shows up additively in the bound.
A similar argument can be shown to work for the adaptive version of \fpltrix. We note that the 
implicit exploration induced by the bias parameter $\gamma$ and other techniques developed in this paper are still 
essential to prove these results.

\paragraph{High-probability bounds.} Another interesting question is whether our results can be extended to hold with 
high probability. Luckily, it is rather straightforward to extend Corollary~\ref{cor:nonadapt} to achieve such a 
result by replacing $\hloss_{t,i}$ with $\tloss_{t,i} = \frac{1}{\omega} \log(1 + \omega \hloss_{t,i})$ for an 
appropriately chosen $\omega>0$, as suggested by \citet{AB10}. While such a result would also enable us to handle 
adaptive environments, it has the same drawback as Corollary~\ref{cor:nonadapt}: it requires perfect knowledge of 
$L_T^*$. Proving high-confidence bounds for the adaptive variant of \fpltrix, however, is far less straightforward; we 
leave this investigation for future work.

\section*{Acknowledgments}
This work was supported by INRIA, the French Ministry of Higher Education and Research, and by FUI project Herm\`es.
The author wishes to thank the anonymous reviewers for their valuable comments that helped to improve the paper.

\appendix

\section{Some technical proofs}
We first prove a statement regarding the mean of the sum of top $m$ out of $d$ independent exponential random variables.
\begin{lemma}\label{lem:expbound}
 Let $Z_1,Z_2,\dots,Z_d$ be i.i.d.~exponential random variables with unit expectation and let $Z_1^*,Z_2^*,\dots,Z_d^*$ 
be their permutation such that $Z_1^*\ge Z_2^*\ge\dots\ge Z_d^*$. Then, for any $1\le m\le d$, 
\[
 \EE{\sum_{i=1}^m Z_i^*} \le m\pa{\log\pa{\frac dm} + 1}.
\]
\end{lemma}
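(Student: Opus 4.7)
The plan is to invoke Rényi's classical representation of the order statistics of i.i.d.\ exponential random variables, compute the expectation of the sum of the top $m$ exactly in terms of harmonic numbers, and then bound the resulting harmonic sum by a logarithm via an integral comparison.

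More concretely, write the order statistics in \emph{increasing} order as $Z_{(1)} \le Z_{(2)} \le \dots \le Z_{(d)}$ (so that $Z_i^* = Z_{(d-i+1)}$). Rényi's representation states that there exist i.i.d.\ unit-mean exponentials $E_1,\dots,E_d$ such that
\[
 Z_{(k)} = \sum_{j=1}^{k} \frac{E_j}{d-j+1},
\]
which is easily obtained from the memoryless property of the exponential distribution. Taking expectations and substituting $i = d-k+1$ yields $\mathbb{E} Z_i^* = \sum_{j=i}^{d} \frac{1}{j} = H_d - H_{i-1}$, with the convention $H_0=0$.

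Summing over $i=1,\dots,m$ and swapping the order of summation in $\sum_{i=1}^{m} H_{i-1} = \sum_{j=0}^{m-1} H_j$ gives, after a short computation, the identity
\[
 \mathbb{E}\left[\sum_{i=1}^m Z_i^*\right] = m H_d - \sum_{i=1}^{m} H_{i-1} = m\left(H_d - H_{m-1}\right) + (m-1).
\]
At this point the remaining work is to bound $H_d - H_{m-1} = \sum_{k=m}^{d} 1/k$ by a logarithm. Separating the first term and applying the standard integral bound $\sum_{k=m+1}^{d} 1/k \le \int_m^d dx/x = \log(d/m)$ yields $H_d - H_{m-1} \le \log(d/m) + 1/m$, hence $m(H_d - H_{m-1}) \le m\log(d/m) + 1$. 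Combining this with the $(m-1)$ term above gives exactly $m(\log(d/m)+1)$.

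There is no real obstacle here: once one invokes Rényi's representation the proof reduces to a routine manipulation of harmonic sums, and the only thing to be careful about is keeping track of the constants so that the $1/m$ slack in the integral bound absorbs into the additive $m$ and produces the clean bound $m(\log(d/m)+1)$ rather than something with a stray $O(1/m)$ term. If one prefers not to invoke Rényi's representation, an equivalent route is to compute $\mathbb{E} Z_i^* = \int_0^\infty \Pr[Z_i^* > t]\,dt$ directly using the fact that $\Pr[Z_i^* > t]$ equals the probability that at least $i$ of the $Z_j$'s exceed $t$, but the harmonic-number expression then requires a slightly less transparent combinatorial manipulation.
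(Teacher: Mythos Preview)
Your proof is correct, and the harmonic-number computation is carried out cleanly; the key identity $\sum_{j=0}^{m-1}H_j = mH_{m-1}-(m-1)$ and the integral bound on $\sum_{k=m}^d 1/k$ are both fine, and the constants do indeed combine to give exactly $m(\log(d/m)+1)$.

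However, your approach is genuinely different from the paper's. The paper does not use R\'enyi's representation or harmonic numbers at all. Instead, it writes $Y=\sum_{i=1}^m Z_i^*$ and uses the layer-cake formula $\mathbb{E}[Y]=\int_0^\infty \mathbb{P}[Y>y]\,dy$, then bounds the tail crudely via $\{\sum_{i=1}^m Z_i^* > y\}\subseteq\{Z_1^* > y/m\}$ together with the union bound $\mathbb{P}[Z_1^*>y/m]\le d\,e^{-y/m}$. Splitting the integral at a free cutoff $A$ gives $\mathbb{E}[Y]\le A + md\,e^{-A/m}$, and optimizing at $A=m\log(d/m)$ yields the result.

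Your argument has the advantage of giving an \emph{exact} expression for the expectation before any bounding, so it shows the inequality is tight (with equality at $m=d$) and makes clear exactly where the slack comes from. The paper's argument, by contrast, is more elementary---it needs nothing beyond the exponential tail $\mathbb{P}[Z_1>t]=e^{-t}$ and a union bound---and would apply verbatim to any i.i.d.\ family with sub-exponential tails, whereas your route is specific to the exponential distribution via R\'enyi's representation.
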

\begin{proof}
 Let us define $Y = \sum_{i=1}^m Z_i^*$. Then, as $Y$ is nonnegative, we have for any $A\ge 0$ that
 \[
  \begin{split}
   \EE{Y} =& \int_0^\infty \PP{Y>y}\,dy 
   \\
   \le & A + \int_A^\infty \PP{\sum_{i=1}^m Z_i^*>y}\,dy
   \\
   \le & A + \int_A^\infty \PP{Z_1^*>\frac ym}\,dy
   \\
   \le & A + d\int_A^\infty \PP{Z_1>\frac ym}\,dy 
   \\
   = & A + d e^{-A/m},
  \end{split}
 \]
 where the last inequality follows from the union bound.
 Setting $A = m\log (d/m)$ minimizes the above expression over the real line, thus proving the statement.
\end{proof}
With this lemma at hand, we are now ready to prove Lemma~\ref{lem:cheat}.
\begin{proof}[Proof of Lemma~\ref{lem:cheat}]
To enhance readability, define $\mu_t = 1/\eta_t$ for $t\ge 1$ and $\mu_0 = 0$. We
start by applying the
classical follow-the-leader/be-the-leader lemma (see, e.g.,
\citealp[Lemma~3.1]{CBLu06:book}) to the loss sequence defined as
$\bigl(\hbl_1-\mu_1\tbZ,\hbl_2 - (\mu_2 - \mu_1)\tbZ,\dots,\hbl_T - (\mu_T - \mu_{T-1}) \tbZ\bigr)$ to obtain
\[
 \sum_{t=1}^T \pa{\tbV_t^+}\transpose\pa{\hbl_t - \pa{\mu_t - \mu_{t-1}}\tbZ}
\le \bv\transpose\pa{\hbL_T - \mu_T\tbZ}.
\]
After reordering and observing that $-\bv\transpose\tbZ\le 0$, we get
\[
\begin{split}
 \sum_{t=1}^T \pa{\tbV_t^+ - \bv}\transpose \hbl_t &\le \sum_{t=1}^T (\mu_t -
\mu_{t-1})\pa{\tbV_t^+}\transpose\tbZ
 \\
 &\le \sum_{t=1}^T (\mu_t - \mu_{t-1}) \cdot \max_{\bu\in\Sw}\bu\transpose\tbZ =
\mu_T \cdot \max_{\bu\in\Sw}\bu\transpose\tbZ,
\end{split}
\]
where we used that the sequence $\pa{\mu_t}$ is nondecreasing and $\bu\transpose\tbZ\ge 0$ for all $\bu\in\Sw$.
The result follows from integrating both sides with respect to the distribution of $\tbZ$ and applying 
Lemma~\ref{lem:expbound} to obtain $\EE{\max_{\bu\in\Sw}\bu\transpose\tbZ}\le m\pa{\log(d/m)+1}$.
\end{proof}


\begin{thebibliography}{31}
\providecommand{\natexlab}[1]{#1}
\providecommand{\url}[1]{\texttt{#1}}
\expandafter\ifx\csname urlstyle\endcsname\relax
  \providecommand{\doi}[1]{doi: #1}\else
  \providecommand{\doi}{doi: \begingroup \urlstyle{rm}\Url}\fi

\bibitem[Abernethy et~al.(2012)Abernethy, Hazan, and Rakhlin]{AHR12}
J.~Abernethy, E.~Hazan, and A.~Rakhlin.
\newblock Interior-point methods for full-information and bandit online
  learning.
\newblock \emph{Information Theory, IEEE Transactions on}, 58\penalty0
  (7):\penalty0 4164--4175, July 2012.

\bibitem[Abernethy et~al.(2014)Abernethy, Lee, Sinha, and Tewari]{ALTS14}
J.~Abernethy, C.~Lee, A.~Sinha, and A.~Tewari.
\newblock Online linear optimization via smoothing.
\newblock In M.-F. Balcan and {\relax Cs}.~{Sz}epesv\'{a}ri, editors,
  \emph{Proceedings of The 27th Conference on Learning Theory}, volume~35 of
  \emph{JMLR Proceedings}, pages 807--823. JMLR.org, 2014.

\bibitem[Allenberg et~al.(2006)Allenberg, Auer, Gy{\"o}rfi, and
  Ottucs{\'a}k]{allenberg06hannan}
C.~Allenberg, P.~Auer, L.~Gy{\"o}rfi, and {\text{Gy}}.~Ottucs{\'a}k.
\newblock Hannan consistency in on-line learning in case of unbounded losses
  under partial monitoring.
\newblock In J.~L. Balc{\'a}zar, P.~M. Long, and F.~Stephan, editors,
  \emph{Proceedings of the 17th International Conference on Algorithmic
  Learning Theory ({ALT} 2006)}, volume 4264 of \emph{Lecture Notes in Computer
  Science}, pages 229--243, Berlin, Heidelberg, October 7--10 2006. Springer.
\newblock ISBN 978-3-540-46649-9.

\bibitem[Audibert and Bubeck(2009)]{AB09}
J.-Y. Audibert and S.~Bubeck.
\newblock Minimax policies for adversarial and stochastic bandits.
\newblock In \emph{Proceedings of the 22nd Annual Conference on Learning Theory
  (COLT)}, 2009.

\bibitem[Audibert and Bubeck(2010)]{AB10}
J.-Y. Audibert and S.~Bubeck.
\newblock Regret bounds and minimax policies under partial monitoring.
\newblock \emph{Journal of Machine Learning Research}, 11:\penalty0 2635--2686,
  2010.

\bibitem[Audibert et~al.(2014)Audibert, Bubeck, and Lugosi]{audibert13regret}
J.-Y. Audibert, S.~Bubeck, and G.~Lugosi.
\newblock Regret in online combinatorial optimization.
\newblock \emph{Mathematics of Operations Research}, 39:\penalty0 31--45, 2014.

\bibitem[Auer et~al.(2002{\natexlab{a}})Auer, Cesa-Bianchi, Freund, and
  Schapire]{auer2002bandit}
P.~Auer, N.~Cesa-Bianchi, Y.~Freund, and R.~E. Schapire.
\newblock The nonstochastic multiarmed bandit problem.
\newblock \emph{SIAM J. Comput.}, 32\penalty0 (1):\penalty0 48--77,
  2002{\natexlab{a}}.
\newblock ISSN 0097-5397.

\bibitem[Auer et~al.(2002{\natexlab{b}})Auer, Cesa-Bianchi, and
  Gentile]{aucbge02}
P.~Auer, N.~Cesa-Bianchi, and C.~Gentile.
\newblock Adaptive and self-confident on-line learning algorithms.
\newblock \emph{Journal of Computer and System Sciences}, 64:\penalty0 48--75,
  2002{\natexlab{b}}.
\newblock \doi{doi:10.1006/jcss.2001.1795}.

\bibitem[Cesa-Bianchi and Lugosi(2006)]{CBLu06:book}
N.~Cesa-Bianchi and G.~Lugosi.
\newblock \emph{Prediction, Learning, and Games}.
\newblock Cambridge University Press, New York, NY, USA, 2006.

\bibitem[Cesa-Bianchi and Lugosi(2009)]{CeBiLu09}
N.~Cesa-Bianchi and G.~Lugosi.
\newblock Combinatorial bandits.
\newblock In S.~Dasgupta and A.~Klivans, editors, \emph{Proceedings of the 22nd
  Annual Conference on Learning Theory}, pages 237--246. Omnipress, June 18--21
  2009.

\bibitem[Cesa-Bianchi and Lugosi(2012)]{CL12}
N.~Cesa-Bianchi and G.~Lugosi.
\newblock Combinatorial bandits.
\newblock \emph{Journal of Computer and System Sciences}, 78:\penalty0
  1404--1422, 2012.

\bibitem[Cesa-Bianchi et~al.(2005)Cesa-Bianchi, Mansour, and
  Stoltz]{CBMS:COLT2005}
N.~Cesa-Bianchi, Y.~Mansour, and G.~Stoltz.
\newblock Improved second-order bounds for prediction with expert advice.
\newblock In \emph{Proceedings of the 18th Annual Conference on Learning Theory
  (COLT-2005)}, pages 217--232. Springer, 2005.

\bibitem[Chen et~al.(2013)Chen, Wang, and Yuan]{CWY13}
W.~Chen, Y.~Wang, and Y.~Yuan.
\newblock Combinatorial multi-armed bandit: General framework and applications.
\newblock In S.~Dasgupta and D.~McAllester, editors, \emph{Proceedings of the
  30th International Conference on Machine Learning (ICML 2013)}, volume~28 of
  \emph{JMLR Workshop and Conference Proceedings}, pages 151--159, 2013.

\bibitem[Devroye et~al.(2013)Devroye, Lugosi, and Neu]{devroye13rwalk}
L.~Devroye, G.~Lugosi, and G.~Neu.
\newblock Prediction by random-walk perturbation.
\newblock In S.~I. Shalev-Shwartz, S., editor, \emph{Proceedings of the 25th
  Annual Conference on Learning Theory}, pages 460--473, 2013.

\bibitem[Gai et~al.(2012)Gai, Krishnamachari, and Jain]{GKJ12}
Y.~Gai, B.~Krishnamachari, and R.~Jain.
\newblock Combinatorial network optimization with unknown variables:
  Multi-armed bandits with linear rewards and individual observations.
\newblock \emph{IEEE/ACM Transactions on Networking}, 20\penalty0 (5):\penalty0
  1466--1478, Oct 2012.

\bibitem[Gy\"{o}rgy et~al.(2007)Gy\"{o}rgy, Linder, Lugosi, and
  Ottucs\'{a}k]{gyorgy07sp}
A.~Gy\"{o}rgy, T.~Linder, G.~Lugosi, and {\relax Gy.}.~Ottucs\'{a}k.
\newblock The on-line shortest path problem under partial monitoring.
\newblock \emph{Journal of Machine Learning Research}, 8:\penalty0 2369--2403,
  2007.
\newblock ISSN 1532-4435.

\bibitem[Hannan(1957)]{Han57}
J.~Hannan.
\newblock Approximation to {B}ayes risk in repeated play.
\newblock \emph{Contributions to the theory of games}, 3:\penalty0 97--139,
  1957.

\bibitem[Hazan and Kale(2010)]{HK10}
E.~Hazan and S.~Kale.
\newblock Extracting certainty from uncertainty: regret bounded by variation
  in costs.
\newblock \emph{Machine Learning}, 80\penalty0 (2-3):\penalty0 165--188, 2010.

\bibitem[Hazan and Kale(2011)]{HK11}
E.~Hazan and S.~Kale.
\newblock Better algorithms for benign bandits.
\newblock \emph{The Journal of Machine Learning Research}, 12:\penalty0
  1287--1311, 2011.

\bibitem[Hutter and Poland(2004)]{HuPo04}
M.~Hutter and J.~Poland.
\newblock Prediction with expert advice by following the perturbed leader for
  general weights.
\newblock In S.~Ben{-}David, J.~Case, and A.~Maruoka, editors,
  \emph{Proceedings of the 15th International Conference on Algorithmic
  Learning Theory (ALT)}, volume 3244 of \emph{Lecture Notes in Computer
  Science}, pages 279--293. Springer, 2004.

\bibitem[Kalai and Vempala(2005)]{KV05}
A.~Kalai and S.~Vempala.
\newblock Efficient algorithms for online decision problems.
\newblock \emph{Journal of Computer and System Sciences}, 71:\penalty0
  291--307, 2005.

\bibitem[Koc\'ak et~al.(2014)Koc\'ak, Neu, Valko, and Munos]{KNVM14}
T.~Koc\'ak, G.~Neu, M.~Valko, and R.~Munos.
\newblock Efficient learning by implicit exploration in bandit problems with
  side observations.
\newblock In Z.~Ghahramani, M.~Welling, C.~Cortes, N.~Lawrence, and
  K.~Weinberger, editors, \emph{Advances in Neural Information Processing
  Systems 27}, pages 613--621, 2014.

\bibitem[Koolen et~al.(2010)Koolen, Warmuth, and Kivinen]{koolen10comphedge}
W.~M. Koolen, M.~K. Warmuth, and J.~Kivinen.
\newblock Hedging structured concepts.
\newblock In \emph{{Proceedings of the 23rd Annual Conference on Learning
  Theory (COLT)}}, pages 93--105, 2010.

\bibitem[Kveton et~al.(2015)Kveton, Wen, Ashkan, and Szepesv{\'a}ri]{KWASz15}
B.~Kveton, Z.~Wen, A.~Ashkan, and {\relax Cs}.~Szepesv{\'a}ri.
\newblock Tight regret bounds for stochastic combinatorial semi-bandits.
\newblock In \emph{AISTATS}, 2015.

\bibitem[Neu and Bart\'ok(2013)]{NeuBartok13}
G.~Neu and G.~Bart\'ok.
\newblock An efficient algorithm for learning with semi-bandit feedback.
\newblock In S.~Jain, R.~Munos, F.~Stephan, and T.~Zeugmann, editors,
  \emph{Proceedings of the 24th International Conference on Algorithmic
  Learning Theory}, volume 8139 of \emph{Lecture Notes in Computer Science},
  pages 234--248. Springer, 2013.

\bibitem[Poland(2005)]{Pol05}
J.~Poland.
\newblock {FPL} analysis for adaptive bandits.
\newblock In \emph{In 3rd Symposium on Stochastic Algorithms, Foundations and
  Applications (SAGA'05)}, pages 58--69, 2005.

\bibitem[Rakhlin and Sridharan(2013)]{RS13}
A.~Rakhlin and K.~Sridharan.
\newblock Online learning with predictable sequences.
\newblock In S.~I. Shalev-Shwartz, S., editor, \emph{Proceedings of the 25th
  Annual Conference on Learning Theory}, pages 993--1019, 2013.

\bibitem[Rakhlin et~al.(2012)Rakhlin, Shamir, and Sridharan]{rakhlin12rr}
S.~Rakhlin, O.~Shamir, and K.~Sridharan.
\newblock Relax and randomize : From value to algorithms.
\newblock In \emph{Advances in Neural Information Processing Systems 25}, pages
  2150--2158. 2012.

\bibitem[Sani et~al.(2014)Sani, Neu, and Lazaric]{SNL14}
A.~Sani, G.~Neu, and A.~Lazaric.
\newblock Exploiting easy data in online optimization.
\newblock In Z.~Ghahramani, M.~Welling, C.~Cortes, N.~Lawrence, and
  K.~Weinberger, editors, \emph{Advances in Neural Information Processing
  Systems 27}, pages 810--818, 2014.

\bibitem[Stoltz(2005)]{stoltz05thesis}
G.~Stoltz.
\newblock \emph{Incomplete information and internal regret in prediction of
  individual sequences}.
\newblock PhD thesis, Universit{\'e} Paris-Sud, 2005.

\bibitem[{Van Erven} et~al.(2014){Van Erven}, Warmuth, and {Kot\l
  owski}]{EWK14}
T.~{Van Erven}, M.~Warmuth, and W.~{Kot\l owski}.
\newblock Follow the leader with dropout perturbations.
\newblock In M.-F. Balcan and {\relax Cs}.~{Sz}epesv\'{a}ri, editors,
  \emph{Proceedings of The 27th Conference on Learning Theory}, volume~35 of
  \emph{JMLR Proceedings}, pages 949--974. JMLR.org, 2014.

\end{thebibliography}
\end{document}